\documentclass{article}
\usepackage{iclr2027_conference,times}
\usepackage{amsmath,amssymb,amsthm,mathtools}
\usepackage{booktabs,graphicx,float,xcolor}
\usepackage{hyperref,url}
\usepackage[breakable,skins]{tcolorbox}
\usepackage{wrapfig}
\newcommand{\Needspace}[1]{\ifdim\dimexpr\pagegoal-\pagetotal\relax<#1\relax\newpage\fi}
\renewcommand{\eqref}[1]{\textup{(\ref{#1})}}
\newtheorem{theorem}{Theorem}
\newtheorem{corollary}[theorem]{Corollary}
\newtheorem{lemma}{Lemma}
\newtheorem{assumption}{Assumption}
\tcbset{statementbox/.style={
    enhanced jigsaw, breakable, lines before break=6,
    colback=white, colframe=black!35,
    boxrule=0.6pt, sharp corners,
    boxsep=0pt, left=5pt, right=5pt, top=3pt, bottom=3pt,
    before skip=6pt, after skip=6pt,
    toprule at break=0pt, bottomrule at break=0pt, pad at break=3pt,
    colbacktitle=white, coltitle=black, fonttitle=\normalfont\bfseries,
    titlerule=0pt, toptitle=3pt, bottomtitle=2pt
}}
\tcolorboxenvironment{theorem}{statementbox,title after break={Theorem~\thetheorem\ (continued)}}
\tcolorboxenvironment{lemma}{statementbox,title after break={Lemma~\thelemma\ (continued)}}
\tcolorboxenvironment{corollary}{statementbox,title after break={Corollary~\thecorollary\ (continued)}}
\tcolorboxenvironment{assumption}{statementbox,title after break={Assumption~\theassumption\ (continued)}}

\newtcolorbox{summarybox}{colback=blue!3!white,colframe=blue!20!gray,
  boxrule=0.4pt,arc=2mm,boxsep=0pt,left=6pt,right=6pt,top=3pt,bottom=3pt,
  before skip=3pt,after skip=4pt,fontupper=\itshape,
  before upper={\setlength{\parskip}{0pt}}}

\newcommand{\mbb}[1]{\mathbb{#1}}
\newcommand{\mcl}[1]{\mathcal{#1}}
\newcommand{\N}{\mathcal{N}}

\title{The Noise-Transmission Burden of Diffusion Network: Why Image Prediction Can Outperform Noise Prediction}
\author{Gyeonghun Kang\thanks{Equal contribution.}\quad
Pengxi Liu\footnotemark[1]\quad
Bingshuo Qian\footnotemark[1]\quad
Xiang Cheng\\
{\normalfont Duke University, Durham, NC, USA}\\
{\normalfont\small\texttt{\{gyeonghun.kang,pengxi.liu,bingshuo.qian,xiang.cheng\}@duke.edu}}}
\iclrfinalcopy
\hypersetup{
  pdftitle={The Noise-Transmission Burden of Diffusion Network: Why Image Prediction Can Outperform Noise Prediction},
  pdfauthor={Gyeonghun Kang, Pengxi Liu, Bingshuo Qian, Xiang Cheng}
}
\begin{document}
% The contents list is confined to the appendix.
\addtocontents{toc}{\protect\setcounter{tocdepth}{-1}}
\maketitle
\fancyhead[L]{Preprint}
% Keep figure spacing compact without stretching gaps to fill the page.
\raggedbottom
\setlength{\intextsep}{8pt plus 1pt minus 1pt}
\setlength{\textfloatsep}{10pt plus 2pt minus 2pt}
\brokenpenalty=10000
\begin{abstract}
Clean-image prediction ($x$-prediction) and noise prediction yield equivalent population-optimal denoisers, yet can perform very differently in finite diffusion networks.
For noisy input $Y_t$ with noise standard deviation $\sigma_t$, a noise predictor $f_{\varepsilon,t}$ induces the clean-image estimate $\widehat X_1=Y_t-\sigma_t f_{\varepsilon,t}(Y_t)$.
This reveals a \textit{noise-transmission burden}: the predictor must preserve information needed to cancel the input corruption.
Under Gaussian corruption of $d$-dimensional data, a linear input bottleneck of rank $r<d$ imposes a denoising-error floor $\sigma_t^2(d-r)$ even with unrestricted downstream computation, while the corresponding optimal risk for image prediction remains bounded.
EDM attenuates this bottleneck contribution at high noise; exact preconditioning cancels it at every noise level.
Without an input bottleneck, assuming exact affine recovery of the noisy patch from the model representation, we characterize how the covariance spectrum of corrections beyond linear denoising determines the width required above the patch dimension.
ImageNet experiments with JiT show that measured denoising losses reflect the predicted bottleneck floor and its attenuation or cancellation through preconditioning.
Further widening yields larger relative denoising gains at low noise, where corrections beyond linear denoising spread across more directions, consistent with our spectral characterization of the required model width.
Together, these results provide a denoising-based account of practical generation-quality gaps across prediction targets, preconditioning choices, and model widths.
\end{abstract}

\section{Introduction}
\label{sec:introduction}
Why should predicting the clean image be easier than predicting the noise added to it? 
At the population optimum, image, noise, and velocity predictors are related by simple identities and yield the same denoiser. 
Yet their performance can differ substantially in finite networks. 
JiT demonstrates this gap in pixel-space diffusion Transformers: image prediction succeeds with coarse patches and narrow input embeddings in settings where noise or velocity prediction performs poorly \citep{li2025jit}. 
Understanding this mismatch matters for choosing prediction targets and allocating model capacity. 
It raises a concrete architectural question: what information must each predictor retain from its input for accurate denoising, and can it preserve that information when an input bottleneck reduces the representation dimension below the patch dimension?

Prior work shows that both data geometry and architecture influence prediction-target preference \citep{li2025jit,jin2026dimensionality,gagneux2026training}, but does not quantify how architectural constraints on retained input information affect denoising. 
Existing output-rank bounds establish a loss floor when predictions are confined to a restricted output subspace \citep{zheng2025rae}. 
A complementary limitation arises when the architecture discards noisy-input information: even an unrestricted decoder may then lack the information needed for accurate denoising. 
How does this error depend on the input bottleneck and noise level, even with unrestricted downstream computation, and how does an explicit input path change it? 
Once input compression is removed, how much additional width is needed to retain the noisy input while representing corrections beyond linear denoising? Appendix~\ref{app:related-literature} reviews existing results and their relation to these questions.

We address these questions through the clean-image estimate induced by a noise predictor.
For a clean image $X_1\in\mbb R^d$ and normalized Gaussian-corrupted observation $Y_t=X_1+\sigma_t\varepsilon$, the estimate is $\widehat X_1=Y_t-\sigma_t f_{\varepsilon,t}(Y_t)$. 
The explicit input term carries the corruption, and the learned branch must supply the noise contribution needed to cancel it. 
We call the requirement to retain this information a \emph{noise-transmission burden}. 
This viewpoint connects input compression, output preconditioning, and hidden width through their effects on attainable denoising accuracy.
Based on this viewpoint, our theoretical analysis and ImageNet experiments with JiT yield three linked results.

\begingroup
\setlength{\leftmargini}{1.5em}
\setlength{\labelwidth}{1em}
\setlength{\labelsep}{0.5em}
\begin{enumerate}
\item \textbf{An input bottleneck imposes a denoising-error floor for noise prediction that grows quadratically with noise.} 
Theorem~\ref{prop:input} and Corollary~\ref{cor:amplification} establish the floor $\sigma_t^2(d-r)$ when the architecture imposes a linear input bottleneck of rank $r<d$, even with unrestricted downstream computation. 
The result requires only a finite second moment for the clean data and no manifold assumption; the corresponding optimal risk for image prediction remains bounded under the same bottleneck. 
Experiments show denoising loss approaching the floor at high noise (Figure~\ref{fig:bottleneck-performance}b), with affine noise-recovery probes below the ceiling (Figure~\ref{fig:noise-ceiling}; Table~\ref{tab:exp-probe-200-predeps}).

\item \textbf{EDM alleviates the noise-transmission burden at high noise; exact preconditioning cancels it.} 
Theorem~\ref{prop:bypass} and Corollary~\ref{cor:preconditioning-clean} characterize how an explicit input path changes the denoising-error floor imposed by the input bottleneck. 
EDM suppresses it at high noise, while exact preconditioning removes it at every noise level. 
Consistent with this analysis, EDM improves FID from $412.11$ for noise prediction to $52.28$, and exact preconditioning reaches $9.15$, close to image prediction's $9.13$ (Figure~\ref{fig:bottleneck-performance}a). 
Concurrently, EDM's denoising loss approaches that of image prediction at high noise, exact preconditioning tracks image prediction across noise levels (Figures~\ref{fig:bottleneck-performance}b and~\ref{fig:bottleneck-loss-grid}), and learned skip coefficients closely track their exact-preconditioning values at moderate and high noise (Figure~\ref{fig:learned-coefficients}a).

\item \textbf{Without an input bottleneck, corrections beyond linear denoising determine how far model width must exceed the patch dimension.} 
Assuming the noisy input patch can be recovered exactly by an affine readout of the model representation, Theorem~\ref{thm:joint-width} and Corollary~\ref{cor:joint-tolerance} characterize the minimum representation width needed for a chosen level of denoising accuracy. 
Since such a representation already supports linear denoising, the additional width needed depends on the remaining nonlinear correction. 
Both the correction's magnitude and the distribution of its variance across directions matter. 
In our experiments, modest widening improves generation, denoising, and affine noisy-input recovery (Figures~\ref{fig:width-performance}a--b, \ref{fig:affine-width}, and~\ref{fig:width-probes}). 
Further widening yields larger relative denoising gains at low noise, where the required nonlinear corrections are spread across more directions than at high noise (Figures~\ref{fig:width-performance}b--c and~\ref{fig:width-performance-i128}b--c).
\end{enumerate}
\endgroup

\section{Problem setup}
\label{sec:setup}
Let $X_1\in\mbb R^d$ be a vectorized clean image drawn from a fixed distribution with $\mbb E\|X_1\|^2<\infty$. The variable $t\in[0,1]$ indexes a prescribed corruption path. We study a fixed interior time $0<t<1$, at which the signal and noise coefficients are both positive, and define the corrupted image and normalized observation by
\begin{align}
 X_t=a_tX_1+b_t\varepsilon,\qquad
 Y_t:=X_t/a_t=X_1+\sigma_t\varepsilon,\qquad
 \sigma_t:=b_t/a_t,
 \label{eq:corruption}
\end{align}
where $a_t,b_t>0$ are known deterministic coefficients and $\varepsilon\sim\N(0,I_d)$ is independent of $X_1$. 
All population risks below are evaluated at a fixed interior time; high noise means $\sigma_t\to\infty$.

Image prediction uses a network $f_{X_1,t}(y)$ to estimate $X_1$, while noise prediction uses $f_{\varepsilon,t}(y)$ to estimate $\varepsilon$. 
Under squared loss, their population-optimal predictors are the conditional means $f_{X_1,t}^{\ast}(y)=\mbb E[X_1\mid Y_t=y,t]$ and $f_{\varepsilon,t}^{\ast}(y)=\mbb E[\varepsilon\mid Y_t=y,t]$. 
At this population level, knowing either predictor determines the other by $f_{X_1,t}^{\ast}(y)=y-\sigma_t f_{\varepsilon,t}^{\ast}(y)$.
Thus both targets yield the same population-optimal clean-image estimate.
However, for a finite network, this equivalence depends on whether the network can represent each predictor equally well. 
Specifically, an image predictor directly outputs its clean-image estimate, while a noise predictor induces
\begin{align}
  \widehat X_1^{(\varepsilon)}(y):=y-\sigma_t f_{\varepsilon,t}(y).
  \label{eq:noise-induced-clean}
\end{align}
Thus $f_{\varepsilon,t}$ must supply the contribution needed to cancel unwanted Gaussian noise carried by the input term $y$.
Since Gaussian noise has nonzero variance in every ambient direction, preserving the information needed for this cancellation can require more network capacity than directly estimating the clean image, which may concentrate near low-dimensional structure \citep{li2025jit}.

The same issue arises for general targets $T_t:=\alpha_tX_1+\beta_t\varepsilon$, where $\alpha_t,\beta_t$ are known deterministic coefficients such that $\alpha_t - \beta_t/\sigma_t \neq 0$.
For example, assuming differentiable $a_t,b_t$, velocity prediction targets $\dot X_t=\dot a_tX_1+\dot b_t\varepsilon$.
Write $f_{T,t}$ for the network estimator and $f_{T,t}^{\ast}(y)=\mbb E[T_t\mid Y_t=y,t]$ for its population-optimal counterpart. 
At the population level, $f_{T,t}^{\ast}$ determines the same population-optimal clean-image estimate as $f_{X_1,t}^{\ast}$, since $f_{T,t}^{\ast}(y)=(\alpha_t-\beta_t/\sigma_t)f_{X_1,t}^{\ast}(y)+(\beta_t/\sigma_t)y$.
Nevertheless, for a finite network, the corresponding clean-image estimate is
\begin{align}
  \widehat X_1^{(T)}(y):=\{f_{T,t}(y)-(\beta_t/\sigma_t)y\}/(\alpha_t-\beta_t/\sigma_t).
  \label{eq:general-induced-clean}
\end{align}
Therefore, whenever $\beta_t\ne0$, forming an accurate clean-image estimate requires the network output $f_{T,t}(y)$ to cancel the unwanted Gaussian contribution of the explicit input term in this conversion.
If $\beta_t=0$ and $\alpha_t\ne0$, the network predicts a scaled clean image. 
Table~\ref{tab:prediction-notation} collects the relevant notation. 

\begin{table}[H]
\centering
\caption{Notation for direct target prediction at a fixed interior time.}
\label{tab:prediction-notation}
\begin{tabular}{llll}
\toprule
Prediction target & Population optimal & Network output & Induced clean image\\
\midrule
Image ($X_1$) & $f_{X_1,t}^{\ast}(y)$ & $f_{X_1,t}(y)$ & $f_{X_1,t}(y)$\\
Noise ($\varepsilon$) & $f_{\varepsilon,t}^{\ast}(y)$ & $f_{\varepsilon,t}(y)$ & $\widehat X_1^{(\varepsilon)}(y)$, \eqref{eq:noise-induced-clean}\\
General ($T_t=\alpha_tX_1+\beta_t\varepsilon$) & $f_{T,t}^{\ast}(y)$ & $f_{T,t}(y)$ & $\widehat X_1^{(T)}(y)$, \eqref{eq:general-induced-clean}\\
\bottomrule
\end{tabular}
\end{table}

We call this requirement to carry the input information needed for noise cancellation a \emph{noise-transmission burden} for a network with limited capacity. 
Specifically, a Transformer's \emph{model width} $d_h$, the dimension of each token's hidden representation, constrains its capacity to represent input and denoising information.
When this width is smaller than the patch dimension $d_p$, or when the patch passes through an intermediate embedding narrower than $d_p$, the network has an \emph{input bottleneck}.
This bottleneck discards directions of the noisy input, leaving their Gaussian components inaccessible to subsequent network computation.

We ask three questions to better illuminate this noise-transmission burden. 
First, when an input bottleneck is present, how much does it hurt noise recovery and the clean-image estimate induced by it?
Second, how much can output preconditioning, such as EDM and its flow-matching adaptation \citep{karras2022edm,kim2025simplereflow}, alleviate the effect of that bottleneck by supplying an explicit input contribution?
Finally, without such an input bottleneck, how much model width $d_h$ is needed to recover the noisy input and approximate its population denoiser?

\section{Theory}
\label{sec:claims}

\subsection{Input bottlenecks impose denoising-error floors on noise prediction}
\label{sec:input}
\begin{summarybox}
For noise prediction, an input bottleneck of rank $r<d$ induces a denoising-error floor of $\sigma_t^2(d-r)$, which grows quadratically with the noise level $\sigma_t$.
\end{summarybox}

If the input embedding discards some directions of the noisy image, how accurately can any subsequent computation recover the corruption noise?
As defined in Section~\ref{sec:setup}, an input bottleneck can arise when the model width satisfies $d_h<d_p$ or when the patch embedding passes through a narrower intermediate dimension.
In the first case, a linear projection from a $d_p$-dimensional patch to $d_h$ coordinates has rank at most $d_h$, so it discards at least $d_p-d_h$ directions.
JiT's $768\to128\to768$ bottleneck patch embedding illustrates the second case: the first linear layer reduces a patch to at most $128$ independent coordinates, and expanding back to model width $d_h=768$ does not restore the discarded information \citep[Section~4.2, Figure~4, and Table~9]{li2025jit}.

In either case, the network can be written generally as
\begin{equation}
 Y_t=X_1+\sigma_t\varepsilon
 \;\xrightarrow{\;A\;}\; AY_t
 \;\xrightarrow{\;f_{\varepsilon,t}\;}\; f_{\varepsilon,t}(AY_t),
 \label{eq:network-input-compression}
\end{equation}
where $A\in\mbb R^{r\times d}$ is a deterministic linear encoder of rank $r<d$, shared across $t$, and $\mcl F$ is the class of all measurable decoders from $\mbb R^r$ to $\mbb R^d$ with finite squared risk.
We retain the predictor notation from Section~\ref{sec:setup}, writing $f_{X_1,t}(AY_t)$ and $f_{\varepsilon,t}(AY_t)$ to make the restricted network input explicit.
All subsequent computation, including attention and the output projection, is included in the unrestricted decoder.
Note that for $M$ nonoverlapping patches of dimension $d_p$ and a shared linear patch embedding of rank $r_p$, the whole-image dimensions are $d=Md_p$ and $r=Mr_p$.
Figure~\ref{fig:dit-architecture}a illustrates this decomposition in standard DiT, together with the output preconditioning studied below.

\begin{figure}[!htbp]
\centering
\setlength{\abovecaptionskip}{3pt}
\includegraphics[width=\linewidth]{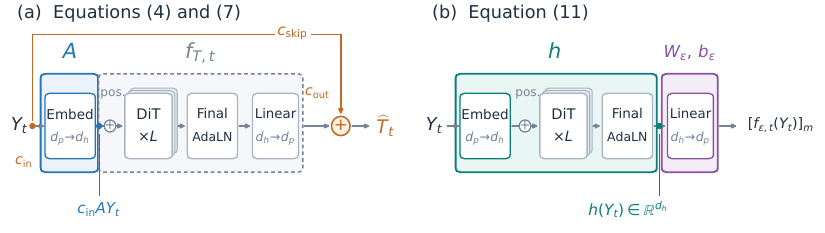}
\caption{Architectural formulations of standard DiT. (a) Linear input map $A$ and output preconditioning, \eqref{eq:network-input-compression} and \eqref{eq:general-preconditioning}. (b) Representation $h$, including final AdaLN, and the affine output layer in \eqref{eq:network-hidden-readout}.}
\label{fig:dit-architecture}
\end{figure}

An input bottleneck can be less problematic for image prediction, which only needs to estimate the clean image $X_1$ from $AY_t$.
The optimal predictor $\mbb E[X_1\mid AY_t,t]$ has denoising risk bounded above by $\mbb E\|X_1-\mbb E[X_1]\|^2$, uniformly in the noise level.
Moreover, clean images may concentrate near a structure of intrinsic dimension below the retained dimension $r$, so a suitable encoder may preserve the clean-image structure needed for denoising despite the input bottleneck.

In contrast, a noise predictor $f_{\varepsilon,t}$ must recover from $AY_t$ the noisy-input information needed to cancel the corruption carried by $Y_t$.
Precisely, its induced clean-image estimate is $\widehat X_1^{(\varepsilon)}(Y_t)=Y_t-\sigma_t f_{\varepsilon,t}(AY_t)$, following \eqref{eq:noise-induced-clean} with the network input restricted to $AY_t$. 
The explicit input term contains the intact noisy image $Y_t$, but the network $f_{\varepsilon,t}$ can use only $AY_t$ to cancel its noise.

This loss of input information limits how accurately the network can recover the noise.
A rank-$r$ encoder $A$ retains $r$ directions and discards $d-r$.
At high noise, recovering the noise in the retained $r$ directions becomes easier because $AY_t/\sigma_t=AX_1/\sigma_t+A\varepsilon$ approaches $A\varepsilon$.
At every noise level, however, the discarded $d-r$ Gaussian components are independent of $AY_t$, imposing a persistent noise-recovery loss of at least $d-r$, even for the best possible decoder.
Theorem~\ref{prop:input} quantifies it.

\begin{theorem}[Optimal noise recovery under an input bottleneck]
\label{prop:input}
Under the preceding assumptions, the optimal sampled-noise risk is at least $d-r$ at every noise level, and
\begin{equation}
 \inf_{f_{\varepsilon,t}\in\mcl F}\mbb E\|\varepsilon-f_{\varepsilon,t}(AY_t)\|^2
 =d-r+O(\sigma_t^{-2}),\qquad \sigma_t\to\infty.
 \label{eq:input-risk}
\end{equation}
\end{theorem}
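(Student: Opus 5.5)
Since the infimum over all measurable decoders of the squared risk is attained by the conditional mean, the left side of \eqref{eq:input-risk} equals $\mbb E\|\varepsilon-\mbb E[\varepsilon\mid AY_t]\|^2$. The plan is to split $\mbb R^d$ orthogonally into the row space $R=\operatorname{row}(A)$ (dimension $r$) and its complement $R^\perp$ (dimension $d-r$), with orthogonal projectors $P$ and $I-P$, and to bound the two pieces separately. Because $\varepsilon$ is standard Gaussian, $P\varepsilon$ and $(I-P)\varepsilon$ are independent. Also $AY_t=AX_1+\sigma_tA\varepsilon$ depends only on $(X_1,P\varepsilon)$, so $(I-P)\varepsilon$ is independent of $AY_t$. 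It follows that $\mbb E[(I-P)\varepsilon\mid AY_t]=0$, and the complement contributes exactly $\mbb E\|(I-P)\varepsilon\|^2=d-r$. By Pythagoras, the risk equals $d-r+\mbb E\|P\varepsilon-\mbb E[P\varepsilon\mid AY_t]\|^2\ge d-r$. This gives the lower bound at every noise level.

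For the asymptotic statement, I need to show that the retained-direction error is $O(\sigma_t^{-2})$. I would upper bound it by the risk of an explicit decoder. Write $A=U\Sigma V^\top$ (thin SVD) and set $A^+=V\Sigma^{-1}U^\top$, so that $A^+A=P$. The candidate decoder is $g(z)=\sigma_t^{-1}(A^+z-P\mu)$ with $\mu=\mbb E X_1$. Then
\[
P\varepsilon-g(AY_t)=-\sigma_t^{-1}P(X_1-\mu),
\]
and its squared risk is $\sigma_t^{-2}\mbb E\|P(X_1-\mu)\|^2\le\sigma_t^{-2}\,\mathrm{tr}\,\mathrm{Cov}(X_1)$. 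This is finite by the second-moment assumption. The decoder $g$ is measurable with finite risk, so it belongs to $\mcl F$. Combining the two steps gives
\[
d-r\le\inf_{f}\mbb E\|\varepsilon-f(AY_t)\|^2\le d-r+\sigma_t^{-2}\mbb E\|X_1-\mu\|^2,
\]
which is \eqref{eq:input-risk}.

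The only delicate point is the independence claim in the first step. It must use the joint independence of $X_1$ and $\varepsilon$, together with the Gaussian fact that orthogonal projections of an isotropic Gaussian are independent. I would state it carefully as independence of $(I-P)\varepsilon$ from the pair $(X_1,P\varepsilon)$, so that the conditional expectation vanishes without any assumption on the law of $X_1$. The rest is routine. I would also note that the same argument does not need $A$ to be shared across $t$; only that $A$ is deterministic.
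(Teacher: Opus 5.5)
Your proposal is correct and follows the paper's proof. It uses the same orthogonal split along $\operatorname{row}(A)$ and the same independence argument, which gives exactly $d-r$ from the discarded directions. It also bounds the retained-direction remainder by $\sigma_t^{-2}\operatorname{tr}(P\operatorname{Cov}(X_1))$, as the paper does.

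The only difference is that the paper first rewrites the retained error exactly as $-\sigma_t^{-1}P\{X_1-\mbb E[X_1\mid AY_t]\}$, an identity it reuses for Corollary~\ref{cor:amplification}, and then compares with the constant predictor. Your explicit affine decoder $g$ builds in that constant predictor directly.
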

This noise-recovery floor translates directly into denoising error since $\mbb E\|\widehat X_1^{(\varepsilon)}(Y_t)-X_1\|^2=\sigma_t^2\mbb E\|\varepsilon-f_{\varepsilon,t}(AY_t)\|^2$.
Corollary~\ref{cor:amplification} quantifies the induced denoising error.

\begin{corollary}[High-noise denoising error induced by an input bottleneck]
\label{cor:amplification}
Under the assumptions of Theorem~\ref{prop:input}, the induced clean-image estimate $\widehat X_1^{(\varepsilon)}(Y_t)=Y_t-\sigma_t f_{\varepsilon,t}(AY_t)$ satisfies
\begin{equation}
 \inf_{f_{\varepsilon,t}\in\mcl F}\mbb E\|\widehat X_1^{(\varepsilon)}(Y_t)-X_1\|^2
 =\sigma_t^2(d-r)+O(1),\qquad \sigma_t\to\infty.
 \label{eq:clean-amplification}
\end{equation}
\end{corollary}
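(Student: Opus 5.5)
The plan is to reduce Corollary~\ref{cor:amplification} to Theorem~\ref{prop:input} through the exact identity that links denoising error to noise-recovery error. Fix any decoder $f_{\varepsilon,t}\in\mcl F$. Since $Y_t=X_1+\sigma_t\varepsilon$, the induced estimate satisfies
\begin{equation*}
 \widehat X_1^{(\varepsilon)}(Y_t)-X_1=Y_t-\sigma_t f_{\varepsilon,t}(AY_t)-X_1=\sigma_t\bigl(\varepsilon-f_{\varepsilon,t}(AY_t)\bigr).
\end{equation*}
Squaring and taking expectations gives $\mbb E\|\widehat X_1^{(\varepsilon)}(Y_t)-X_1\|^2=\sigma_t^2\,\mbb E\|\varepsilon-f_{\varepsilon,t}(AY_t)\|^2$. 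This holds pointwise in $f_{\varepsilon,t}$. Both risks are finite simultaneously, because the decoder has finite squared risk and $\mbb E\|X_1\|^2<\infty$. The map from noise predictors to induced estimates is a bijection on the class, and $\sigma_t^2>0$ is a fixed constant at the given time, so the infimum commutes with multiplication by $\sigma_t^2$:
\begin{equation*}
 \inf_{f_{\varepsilon,t}\in\mcl F}\mbb E\|\widehat X_1^{(\varepsilon)}(Y_t)-X_1\|^2=\sigma_t^2\inf_{f_{\varepsilon,t}\in\mcl F}\mbb E\|\varepsilon-f_{\varepsilon,t}(AY_t)\|^2.
\end{equation*}

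Next I will invoke \eqref{eq:input-risk}. It states that the right-hand infimum equals $d-r+R(\sigma_t)$, where $|R(\sigma_t)|\le C\sigma_t^{-2}$ for all sufficiently large $\sigma_t$, with some constant $C$ independent of $\sigma_t$. Multiplying by $\sigma_t^2$ gives $\sigma_t^2(d-r)+\sigma_t^2R(\sigma_t)$, and $|\sigma_t^2R(\sigma_t)|\le C$. This is exactly the $O(1)$ remainder in \eqref{eq:clean-amplification}. The theorem's nonasymptotic lower bound $d-r$ also carries over, giving the floor $\sigma_t^2(d-r)$ at every noise level.

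The only step needing care is confirming that the $O(\sigma_t^{-2})$ in Theorem~\ref{prop:input} is a genuine uniform bound as $\sigma_t\to\infty$, with $A$ and the data law fixed, rather than merely an $o(1)$ term. A weaker $o(1)$ would become only $o(\sigma_t^2)$ after rescaling. If the theorem's proof needs to be reopened, I would expect the constant to be controlled by $\mbb E\|X_1\|^2$ and $\|A\|$. One gets it by decomposing $\varepsilon$ into its component in the row space of $A$ and the orthogonal complement. The complement is independent of $AY_t$ and contributes exactly $d-r$. For the row-space component, the explicit decoder that inverts $A$ on its row space and divides by $\sigma_t$ has excess risk at most $\sigma_t^{-2}\mbb E\|P X_1\|^2$, where $P$ is the projection onto that row space.
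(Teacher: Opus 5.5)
Your proposal is correct and follows essentially the paper's route: the pointwise identity $\widehat X_1^{(\varepsilon)}(Y_t)-X_1=\sigma_t\{\varepsilon-f_{\varepsilon,t}(AY_t)\}$, followed by rescaling Theorem~\ref{prop:input} by $\sigma_t^2$. The paper sidesteps your uniformity concern by multiplying the exact decomposition from that theorem's proof, $d-r+\sigma_t^{-2}\mbb E\|P_A\{X_1-\mbb E[X_1\mid AY_t,t]\}\|^2$, whose remainder Lemma~\ref{lem:compressed-clean-risk} bounds by $\operatorname{tr}(P_A\operatorname{Cov}(X_1))$ uniformly in $\sigma_t$ --- the same kind of bound your explicit row-space decoder gives, with $\mbb E\|P_AX_1\|^2$ in place of the centered trace.
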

The discarded directions therefore contribute at least $\sigma_t^2(d-r)$ to denoising error at every noise level, making their effect more severe at higher noise even with the best possible decoder.
The optimal denoising risk for image prediction under the same bottleneck remains bounded at every noise level.

\subsection{Output preconditioning controls the noise-transmission burden}
\label{sec:conditioning-theory}
\begin{summarybox}
Output preconditioning controls the noise-transmission burden by supplying an explicit input contribution. Exact preconditioning makes the network's target a scaled clean image, while EDM approaches this form at high noise.
\end{summarybox}

Output preconditioning combines the network output with a scaled copy of the noisy input \citep{karras2022edm,song2023consistency,li2025jit}, which can mitigate the noise-transmission burden.
To see this, consider the general target $T_t=\alpha_t X_1+\beta_t\varepsilon$ introduced in Section~\ref{sec:setup}.
With an input bottleneck $A$ as in Section~\ref{sec:input}, a preconditioned estimate takes the form
\begin{equation}
 \widehat T_t=c_{\mathrm{skip}}Y_t
 +c_{\mathrm{out}}f_{T,t}(c_{\mathrm{in}}AY_t).
 \label{eq:general-preconditioning}
\end{equation}
Here $f_{T,t}$ denotes the raw network in the preconditioned parameterization, and $\widehat T_t$ is the estimate of $T_t$ after preconditioning.
All coefficients are deterministic at the fixed time, $c_{\mathrm{in}},c_{\mathrm{out}}\ne0$, and $f_{T,t}\in\mcl F$ is unrestricted.
Since $c_{\mathrm{in}},c_{\mathrm{out}}$ can be absorbed into $f_{T,t}$, they do not change the best attainable risk.
The main effect comes from the skip coefficient $c_{\mathrm{skip}}$, which determines how much of the intact input $Y_t$ is added directly to the network output, while the network sees only $AY_t$.

Specifically, the skip term supplies $c_{\mathrm{skip}}Y_t$, leaving the scaled network $c_{\mathrm{out}}f_{T,t}(c_{\mathrm{in}}AY_t)$ to predict $T_t-c_{\mathrm{skip}}Y_t=(\alpha_t-c_{\mathrm{skip}})X_1+(\beta_t-c_{\mathrm{skip}}\sigma_t)\varepsilon$.
Unless $c_{\mathrm{skip}}=\beta_t/\sigma_t$, this residual target still contains noise in the $d-r$ directions discarded by $A$ in the input.
These components remain inaccessible to the network, yielding the following risk floor.

\begin{theorem}[Optimal target prediction under output preconditioning]
\label{prop:bypass}
Under the preceding assumptions, the optimal target risk is at least $(\beta_t-c_{\mathrm{skip}}\sigma_t)^2(d-r)$ at every noise level, and
\begin{equation}
 \inf_{f_{T,t}\in\mcl F}\mbb E\|T_t-\widehat T_t\|^2
 = (\beta_t-c_{\mathrm{skip}}\sigma_t)^2(d-r)
 +O\!\left(\left(\alpha_t-\frac{\beta_t}{\sigma_t}\right)^2
 +(\alpha_t-c_{\mathrm{skip}})^2\right),
 \quad \sigma_t\to\infty.
 \label{eq:preconditioning-floor}
\end{equation}
\end{theorem}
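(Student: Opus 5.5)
We reduce to a residual prediction problem and apply the same orthogonal-decomposition argument that underlies Theorem~\ref{prop:input}. Since $c_{\mathrm{in}},c_{\mathrm{out}}\ne0$ are deterministic, the map $f_{T,t}\mapsto c_{\mathrm{out}}f_{T,t}(c_{\mathrm{in}}\,\cdot)$ is a bijection of $\mcl F$. The infimum therefore equals $\inf_{g\in\mcl F}\mbb E\|R_t-g(AY_t)\|^2$, where
\[
R_t:=T_t-c_{\mathrm{skip}}Y_t=\gamma X_1+\delta\varepsilon,\qquad \gamma:=\alpha_t-c_{\mathrm{skip}},\quad \delta:=\beta_t-c_{\mathrm{skip}}\sigma_t .
\]
This infimum is attained by $g^\ast=\mbb E[R_t\mid AY_t]$ and equals $\mbb E\|R_t-\mbb E[R_t\mid AY_t]\|^2$. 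Finiteness follows from $\mbb E\|X_1\|^2<\infty$.

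\textbf{Lower bound.} Let $P$ be the orthogonal projector onto $\ker A$, of rank $d-r$, and split $R_t=(I-P)R_t+PR_t$. The loss splits accordingly and each piece is nonnegative, so the risk is at least $\mbb E\|PR_t-\mbb E[PR_t\mid AY_t]\|^2$. Write $PR_t=\gamma PX_1+\delta P\varepsilon$. The vector $P\varepsilon$ is independent of $(X_1,(I-P)\varepsilon)$, and $AY_t=AX_1+\sigma_tA(I-P)\varepsilon$ is a function of the latter. Hence $P\varepsilon$ is independent of $\sigma(AY_t)\vee\sigma(X_1)$. Conditioning first on the larger $\sigma$-field $\mcl G=\sigma(X_1,(I-P)\varepsilon)$ and using the tower property gives
\[
\mbb E\|PR_t-\mbb E[PR_t\mid AY_t]\|^2\ge \mbb E\|PR_t-\mbb E[PR_t\mid\mcl G]\|^2=\delta^2\,\mbb E\|P\varepsilon\|^2=\delta^2(d-r).
\]
This holds at every noise level.

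\textbf{Upper bound as $\sigma_t\to\infty$.} Exhibit an explicit decoder. A natural choice is $g(z)=\gamma\mu+\delta P^{\perp}$-reconstruction, namely $g(AY_t)=\gamma\,\mbb E X_1+(\delta/\sigma_t)A^{+}(AY_t-A\mbb E X_1)$ with $A^{+}$ the pseudoinverse, so that $A^{+}A=I-P$. Then
\[
R_t-g=\gamma(X_1-\mu)+\delta P\varepsilon-(\delta/\sigma_t)(I-P)(X_1-\mu),
\]
where the $(I-P)\varepsilon$ terms cancel exactly. Independence of $\varepsilon$ from $X_1$ kills the cross terms with $P\varepsilon$. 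The risk is therefore
\[
\delta^2(d-r)+\mbb E\|\gamma(X_1-\mu)-(\delta/\sigma_t)(I-P)(X_1-\mu)\|^2\le \delta^2(d-r)+2\bigl(\gamma^2+(\delta/\sigma_t)^2\bigr)\mbb E\|X_1-\mu\|^2 .
\]
It remains to express the remainder in the stated form. Note that $\delta/\sigma_t=\beta_t/\sigma_t-c_{\mathrm{skip}}=(\beta_t/\sigma_t-\alpha_t)+\gamma$, so $(\delta/\sigma_t)^2\le 2(\alpha_t-\beta_t/\sigma_t)^2+2\gamma^2$. The remainder is thus $O\bigl((\alpha_t-\beta_t/\sigma_t)^2+(\alpha_t-c_{\mathrm{skip}})^2\bigr)$ with constant proportional to $\mathrm{tr}\,\mathrm{Cov}(X_1)$, uniformly in $\sigma_t$. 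Combined with the lower bound, this gives \eqref{eq:preconditioning-floor}.

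\textbf{Main obstacle.} The delicate step is the lower bound: it requires justifying that $P\varepsilon$ is independent of $AY_t$ even though $X_1$ is arbitrary and $P$ does not commute with the data covariance. Using the orthogonal projector onto $\ker A$, rather than an arbitrary complement, is exactly what makes $(I-P)\varepsilon$ and $P\varepsilon$ independent Gaussians. The remaining steps are bookkeeping. As a sanity check, the result should reduce to Theorem~\ref{prop:input} with $\alpha_t=0$, $\beta_t=1$, $c_{\mathrm{skip}}=0$.
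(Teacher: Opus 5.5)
Your proof is correct, but it takes a different route from the paper's.

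\textbf{The paper's route.} It computes the optimal risk exactly. It takes the conditional mean $\mbb E[T_t-c_{\mathrm{skip}}Y_t\mid AY_t]$ and splits the error along the row space of $A$ (projector $P_A=I-P$ in your notation) and its complement. This gives the identity
\[
(\beta_t-c_{\mathrm{skip}}\sigma_t)^2(d-r)+\Bigl(\alpha_t-\frac{\beta_t}{\sigma_t}\Bigr)^2\mbb E\bigl\|P_A\{X_1-\mbb E[X_1\mid AY_t]\}\bigr\|^2+(\alpha_t-c_{\mathrm{skip}})^2\,\mbb E\bigl\|(I-P_A)\{X_1-\mbb E[X_1\mid AY_t]\}\bigr\|^2 .
\]
The paper then bounds the two posterior-uncertainty terms by the constant predictor (Lemma~\ref{lem:compressed-clean-risk}).

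\textbf{Your route.} You sandwich the infimum instead. The lower bound comes from enlarging the $\sigma$-field to $\sigma(X_1,(I-P)\varepsilon)$, which removes all clean-image uncertainty at once without computing any conditional mean. The upper bound comes from an explicit affine decoder.

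\textbf{What each buys.} The paper's identity makes the remainder visibly a nonnegative, direction-weighted Bayes risk for the clean image. It also specializes directly to the exact decomposition behind Theorem~\ref{prop:input}. Your argument is more elementary. It also shows that an affine decoder already attains the floor up to the stated remainder, so the expansion holds even when the decoder class is restricted to affine maps.

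\textbf{A small sharpening.} Since $\gamma-\delta/\sigma_t=\alpha_t-\beta_t/\sigma_t$, your decoder's clean error is $\gamma P(X_1-\mu)+(\alpha_t-\beta_t/\sigma_t)(I-P)(X_1-\mu)$, with $\mu=\mbb E X_1$. These two pieces are orthogonal, so the triangle inequality is unnecessary. The remainder is exactly $\gamma^2\operatorname{tr}(P\operatorname{Cov}(X_1))+(\alpha_t-\beta_t/\sigma_t)^2\operatorname{tr}((I-P)\operatorname{Cov}(X_1))$. This is precisely the constant-predictor bound the paper obtains from Lemma~\ref{lem:compressed-clean-risk}. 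In effect, your decoder is the constant-clean-estimate version of the paper's posterior-mean decomposition.
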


This bound shows how $c_{\mathrm{skip}}$ controls the noise-transmission burden: the $d-r$ discarded noise components contribute $(\beta_t-c_{\mathrm{skip}}\sigma_t)^2(d-r)$ to the risk. 
If $c_{\mathrm{skip}}=\beta_t/\sigma_t$, which we call \emph{exact preconditioning}, this term is eliminated, making the target for the scaled network output a scaled clean image, $T_t-c_{\mathrm{skip}}Y_t=(\alpha_t-\beta_t/\sigma_t)X_1$. 
Since $\alpha_t-\beta_t/\sigma_t\ne0$, the network therefore performs image prediction up to a known scaling. 

EDM alleviates the noise-transmission burden without exactly canceling it.
It uses the image target, $(\alpha_t,\beta_t)=(1,0)$, with coefficients
\[
 c_{\mathrm{skip}}=\frac{\sigma_{\mathrm{data}}^2}{\sigma_t^2+\sigma_{\mathrm{data}}^2},
 \qquad c_{\mathrm{out}}=\frac{\sigma_t\sigma_{\mathrm{data}}}{\sqrt{\sigma_t^2+\sigma_{\mathrm{data}}^2}},
 \qquad c_{\mathrm{in}}=\frac{1}{\sqrt{\sigma_t^2+\sigma_{\mathrm{data}}^2}},
\]
with fixed data scale $\sigma_{\mathrm{data}}>0$ \citep[Table~1 and Appendix~B.6, Equations~(117), (131), and (138)]{karras2022edm}. 
Theorem~\ref{prop:bypass} gives the discarded-noise term
\[
 c_{\mathrm{skip}}^2\sigma_t^2(d-r)
 =\frac{\sigma_{\mathrm{data}}^4\sigma_t^2}{(\sigma_t^2+\sigma_{\mathrm{data}}^2)^2}(d-r)
 =O(\sigma_t^{-2}).
\]
This contribution vanishes at high noise, while the remaining term in \eqref{eq:preconditioning-floor} stays bounded.
Theorem~\ref{thm:edm-target-convergence} formalizes the corresponding target limit: the raw network target $(X_1-c_{\mathrm{skip}}Y_t)/c_{\mathrm{out}}$ converges in mean square to $X_1/\sigma_{\mathrm{data}}$.
Thus the learned target approaches a scaled clean image, and the unavoidable contribution of discarded noise decays at an explicit rate. Exact preconditioning eliminates that contribution at every noise level.
This is consistent with JiT's observation that EDM-style preconditioning improves over direct noise and velocity prediction while remaining worse than image prediction in its setting.
JiT also notes the high-noise approach to image prediction as a possible explanation \citep[Appendix~B.2 and Table~10]{li2025jit}.

To illustrate the denoising consequences of these parameterizations, we convert the target prediction into a clean-image estimate. 
Rewriting the same target as $T_t=(\alpha_t-\beta_t/\sigma_t)X_1+(\beta_t/\sigma_t)Y_t$ and applying the conversion in \eqref{eq:general-induced-clean} to $\widehat T_t$ gives 
\begin{equation}
 \widehat X_1^{(T)}(Y_t)
 =\frac{\widehat T_t-(\beta_t/\sigma_t)Y_t}{\alpha_t-\beta_t/\sigma_t}
 =\underbrace{\frac{c_{\mathrm{skip}}-\beta_t/\sigma_t}{\alpha_t-\beta_t/\sigma_t}}_{\displaystyle\tilde c_{\mathrm{skip}}}Y_t
 +\underbrace{\frac{c_{\mathrm{out}}}{\alpha_t-\beta_t/\sigma_t}}_{\displaystyle\tilde c_{\mathrm{out}}}f_{T,t}(c_{\mathrm{in}}AY_t).
 \label{eq:preconditioning-conversion}
\end{equation}
The converted coefficients $\tilde c_{\mathrm{skip}}$ and $\tilde c_{\mathrm{out}}$ determine the clean-image estimate, whose risk follows from Theorem~\ref{prop:bypass}.

\begin{corollary}[Denoising risk under output preconditioning]
\label{cor:preconditioning-clean}
For $\widehat X_1^{(T)}(Y_t)=\tilde c_{\mathrm{skip}}Y_t+\tilde c_{\mathrm{out}}f_{T,t}(c_{\mathrm{in}}AY_t)$ under the preceding assumptions, every decoder satisfies $\mbb E\|\widehat X_1^{(T)}(Y_t)-X_1\|^2\ge \tilde c_{\mathrm{skip}}^2\sigma_t^2(d-r)$. If $\tilde c_{\mathrm{skip}}(t)$ stays bounded as $\sigma_t\to\infty$, then
\begin{equation}
 \inf_{f_{T,t}\in\mcl F}\mbb E\|\widehat X_1^{(T)}(Y_t)-X_1\|^2
 =\tilde c_{\mathrm{skip}}(t)^2\sigma_t^2(d-r)+O(1),
 \qquad \sigma_t\to\infty.
 \label{eq:preconditioning-clean-risk}
\end{equation}
\end{corollary}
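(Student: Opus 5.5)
The plan is to reduce Corollary~\ref{cor:preconditioning-clean} to a direct orthogonal decomposition of the clean-image error, rather than going through the target risk of Theorem~\ref{prop:bypass}. Let $P$ be the orthogonal projector onto the row space of $A$, which has rank $r$, and let $Q=I_d-P$ be the complementary projector of rank $d-r$. Since $\varepsilon\sim\N(0,I_d)$, the components $P\varepsilon$ and $Q\varepsilon$ are independent. Moreover, $AY_t=AX_1+\sigma_tAP\varepsilon$ depends only on $(X_1,P\varepsilon)$, so $Q\varepsilon$ is independent of $(X_1,AY_t)$. I will write the error as
\[
 \widehat X_1^{(T)}(Y_t)-X_1=\tilde c_{\mathrm{skip}}\sigma_t Q\varepsilon+R,
\]
where
\[
 R:=(\tilde c_{\mathrm{skip}}-1)X_1+\tilde c_{\mathrm{skip}}\sigma_t P\varepsilon+\tilde c_{\mathrm{out}}f_{T,t}(c_{\mathrm{in}}AY_t).
\]
The remainder $R$ is measurable with respect to $(X_1,P\varepsilon)$, hence independent of the mean-zero vector $Q\varepsilon$. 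The cross term therefore vanishes, and for every decoder
\[
 \mbb E\|\widehat X_1^{(T)}-X_1\|^2=\tilde c_{\mathrm{skip}}^2\sigma_t^2(d-r)+\mbb E\|R\|^2.
\]
Dropping $\mbb E\|R\|^2\ge0$ gives the lower bound at every noise level. Finite second moments of $R$ follow from $\mbb E\|X_1\|^2<\infty$ and the finite-risk class $\mcl F$.

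For the asymptotic equality, it remains to show that $\inf_{f_{T,t}}\mbb E\|R\|^2=O(1)$ whenever $\tilde c_{\mathrm{skip}}$ stays bounded. Since $\tilde c_{\mathrm{out}},c_{\mathrm{in}}\ne0$, the network term ranges over all measurable functions of $AY_t$. I would therefore choose the decoder so that $\tilde c_{\mathrm{out}}f_{T,t}(c_{\mathrm{in}}AY_t)=-\tilde c_{\mathrm{skip}}\sigma_t P\varepsilon$ exactly. Write $AY_t=A X_1+\sigma_t A\varepsilon$ and let $A^+$ be the pseudoinverse, so that $A^+A=P$. Then $\sigma_tP\varepsilon=A^+AY_t-PX_1$, a measurable function of $AY_t$ plus a term linear in $X_1$.

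With the decoder chosen so that $\tilde c_{\mathrm{out}}f_{T,t}=-\tilde c_{\mathrm{skip}}A^+AY_t$, the remainder becomes
\[
 R=(\tilde c_{\mathrm{skip}}-1)X_1+\tilde c_{\mathrm{skip}}PX_1,
\]
so
\[
 \mbb E\|R\|^2\le 2\bigl((\tilde c_{\mathrm{skip}}-1)^2+\tilde c_{\mathrm{skip}}^2\bigr)\mbb E\|X_1\|^2.
\]
This bound is $O(1)$ because $\tilde c_{\mathrm{skip}}$ is bounded. Combined with the lower bound, this gives \eqref{eq:preconditioning-clean-risk}.

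This route avoids converting the target-scale $O(\cdot)$ term of Theorem~\ref{prop:bypass} by dividing by $(\alpha_t-\beta_t/\sigma_t)^2$, which could blow up. The main point needing care is the independence of $Q\varepsilon$ from $(X_1,AY_t)$ and from the decoder output. This relies on Gaussianity (uncorrelatedness implies independence) and on the independence of $X_1$ and $\varepsilon$, and I would state it explicitly as the first step. One could alternatively cite Theorem~\ref{prop:bypass} with the identity $\widehat X_1^{(T)}-X_1=(\widehat T_t-T_t)/(\alpha_t-\beta_t/\sigma_t)$, together with $\tilde c_{\mathrm{skip}}\sigma_t=(c_{\mathrm{skip}}\sigma_t-\beta_t)/(\alpha_t-\beta_t/\sigma_t)$, to obtain the lower bound. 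I would mention this as a consistency check, but rely on the direct construction for the $O(1)$ upper bound.
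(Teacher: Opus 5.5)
Your proof is correct, and it takes a more self-contained route than the paper.

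\textbf{The paper's route.} The paper's proof is a two-line reduction. Since $\widehat X_1^{(T)}(Y_t)-X_1$ is exactly the error $\widehat T_t-T_t$ for the image target $(\alpha_t,\beta_t)=(1,0)$ with coefficients $\tilde c_{\mathrm{skip}},\tilde c_{\mathrm{out}}$, it applies Theorem~\ref{prop:bypass} at that target. It then reads off both the floor and the remainder $O\bigl(1+(1-\tilde c_{\mathrm{skip}})^2\bigr)$ from the exact optimal-risk formula \eqref{eq:preconditioning-risk}. Your worry about dividing a target-scale remainder by $(\alpha_t-\beta_t/\sigma_t)^2$ therefore never arises in the paper's argument.

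\textbf{Your route.} You prove a Pythagorean identity that holds for every decoder. It uses the same independence of the discarded noise from $(X_1,AY_t)$ that drives Theorems~\ref{prop:input} and~\ref{prop:bypass}. You then get the $O(1)$ upper bound from an explicit linear decoder instead of the conditional mean.

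\textbf{What each buys.} Your route avoids conditional expectations and Lemma~\ref{lem:compressed-clean-risk}. It also shows that a decoder ignoring all clean-image structure already achieves an $O(1)$ remainder. The paper's route gives the exact optimal remainder, namely the conditional-mean uncertainty terms with constant $\operatorname{tr}(\operatorname{Cov}(X_1))$, by reusing one general theorem.

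\textbf{Two small slips.} First, you cannot choose the decoder so that $\tilde c_{\mathrm{out}}f_{T,t}=-\tilde c_{\mathrm{skip}}\sigma_tP\varepsilon$ ``exactly'', because $P\varepsilon$ is not a function of $AY_t$. Your subsequent choice $\tilde c_{\mathrm{out}}f_{T,t}=-\tilde c_{\mathrm{skip}}A^+AY_t$ cancels it only up to $-\tilde c_{\mathrm{skip}}PX_1$.

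Second, that choice gives
\[
R=(\tilde c_{\mathrm{skip}}-1)X_1-\tilde c_{\mathrm{skip}}PX_1=(\tilde c_{\mathrm{skip}}-1)(I-P)X_1-PX_1,
\]
not $+\tilde c_{\mathrm{skip}}PX_1$. Your bound is unaffected. The corrected form gives $\mbb E\|R\|^2=(1-\tilde c_{\mathrm{skip}})^2\mbb E\|(I-P)X_1\|^2+\mbb E\|PX_1\|^2$ without the factor $2$. This is precisely \eqref{eq:preconditioning-risk} at $(\alpha_t,\beta_t)=(1,0)$ with the conditional mean replaced by zero.
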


Corollary~\ref{cor:preconditioning-clean} puts the three cases in a common form. Exact preconditioning gives $\tilde c_{\mathrm{skip}}=0$, eliminating the discarded-noise contribution, while bare noise prediction has $\tilde c_{\mathrm{skip}}=1$ and incurs the $\sigma_t^2(d-r)$ error growth. For EDM, $\tilde c_{\mathrm{skip}}=c_{\mathrm{skip}}=\sigma_{\mathrm{data}}^2/(\sigma_t^2+\sigma_{\mathrm{data}}^2)$, so this contribution vanishes at high noise and the optimal denoising risk remains bounded.

\subsection{The residual beyond linear denoising determines model width}
\label{sec:joint-width}
\begin{summarybox}
The extra model width needed for denoising is determined by the residual beyond linear denoising of the noisy patch, assuming the model representation allows affine recovery of that patch.
\end{summarybox}

When a noise predictor has access to the full noisy image, with $A=I_d$ and model width $d_h\ge d_p$, how does its width limit denoising accuracy?
To investigate this question, we focus on one output patch $m$ of dimension $d_p$, whose hidden representation has model width $d_h$ and may depend on the whole noisy image $Y_t$.
A subscript $m$ denotes the corresponding patch quantities, and $[\cdot]_m$ selects patch $m$ from a full-image prediction. In particular, $[f_{X_1,t}^{\ast}(Y_t)]_m=\mbb E[X_{1,m}\mid Y_t,t]$.
For this patch, we rewrite the network in \eqref{eq:network-input-compression} making its hidden representation and final output layer explicit:
\begin{equation}
 Y_t
 \;\xrightarrow{\;h\;}\; h(Y_t)\in\mbb R^{d_h}
 \;\xrightarrow{\;\text{final output layer}\;}\;
 [f_{\varepsilon,t}(Y_t)]_m=W_{\varepsilon}h(Y_t)+b_{\varepsilon}.
 \label{eq:network-hidden-readout}
\end{equation}
Here $h$ includes all computation before the final output layer, with $W_{\varepsilon}\in\mbb R^{d_p\times d_h}$ and $b_{\varepsilon}\in\mbb R^{d_p}$ fixed across images.
For a Transformer, $h$ includes attention and feed-forward blocks, final normalization, and modulation (Figure~\ref{fig:dit-architecture}b).
The map $W_{\varepsilon}h(Y_t)+b_{\varepsilon}$ denotes the final linear output projection, which is affine when a bias is included.

It is not clear a priori what information should be retained in the hidden representation $h(Y_t)$ for accurate denoising.
However, the identity $[f_{\varepsilon,t}^{\ast}(Y_t)]_m=(Y_{t,m}-[f_{X_1,t}^{\ast}(Y_t)]_m)/\sigma_t$ suggests a possible realization of the optimal noise predictor: $h(Y_t)$ could make both the noisy patch $Y_{t,m}$ and the denoising information for $[f_{X_1,t}^{\ast}(Y_t)]_m$ separately accessible through affine readouts.
Motivated by this, we assume that the noisy patch can be recovered affinely from $h(Y_t)$, as formalized below.

\begin{assumption}[Affine recovery of the noisy patch]
\label{ass:patch-recovery}
We consider only representations $h$ satisfying $\mbb E\|h(Y_t)\|^2<\infty$ for which there exist $W\in\mbb R^{d_p\times d_h}$ and $b\in\mbb R^{d_p}$ such that $Wh(Y_t)+b=Y_{t,m}$ with probability one.
\end{assumption}

This representation strategy is also motivated by the linear structure observed in trained denoisers.
\citet{wang2024gaussianscore} find that Gaussian score approximations describe learned scores well at moderate to high noise, while \citet[Appendix~D]{li2024hiddengaussian} report that denoisers approach the identity map at low noise.
These observations motivate studying representations that keep the noisy patch affinely accessible while encoding additional denoising corrections.

Assumption~\ref{ass:patch-recovery} already provides affine denoising features.
Under Assumption~\ref{ass:patch-recovery} and \eqref{eq:network-hidden-readout}, both the noisy patch $Y_{t,m}$ and the predicted noise $[f_{\varepsilon,t}(Y_t)]_m$ are affine in $h(Y_t)$, so their combination $\widehat X_{1,m}^{(\varepsilon)}(Y_t)=Y_{t,m}-\sigma_t[f_{\varepsilon,t}(Y_t)]_m$ is also affine in $h(Y_t)$.
Choosing $W_{\varepsilon},b_{\varepsilon}$ allows this clean estimate to realize any affine estimate of the clean patch $X_{1,m}$ based on the noisy patch $Y_{t,m}$.
In this sense, every representation satisfying Assumption~\ref{ass:patch-recovery} can support the best affine denoising estimate from the noisy patch alone, denoted by $\ell_t(Y_{t,m})$ and derived in Appendix~\ref{app:affine-baseline}.
Optimizing $h,W_{\varepsilon},b_{\varepsilon}$ within this class yields an approximation error no greater than that of $\ell_t(Y_{t,m})$.

What remains is how much additional model width $d_h-d_p$ is needed to represent the denoising features beyond this affine baseline.
We characterize these features through the residual $R_t:=[f_{X_1,t}^{\ast}(Y_t)]_m-\ell_t(Y_{t,m})$, which can reflect nonlinear dependence on the noisy patch or information from other patches.
The width question translates into how well the excess $d_h-d_p$ coordinates can approximate this residual.
Let $\lambda_j(\operatorname{Cov}(R_t))$ denote the $j$-th largest eigenvalue of the residual covariance. Theorem~\ref{thm:joint-width} characterizes the optimal approximation error through these eigenvalues.

\begin{theorem}[Optimal denoiser approximation at a fixed width]
\label{thm:joint-width}
Under Assumption~\ref{ass:patch-recovery},
\begin{align}
 \inf_{h,W_{\varepsilon},b_{\varepsilon}}\mbb E\|\widehat X_{1,m}^{(\varepsilon)}(Y_t)-[f_{X_1,t}^{\ast}(Y_t)]_m\|^2
 =\sum_{j>d_h-d_p}\lambda_j\!\left(\operatorname{Cov}(R_t)\right).
 \label{eq:joint-width-tail}
\end{align}
The infimum is attained by $h(Y_t)=(Y_{t,m},Q^\top R_t,0)$ for some $W_{\varepsilon},b_{\varepsilon}$, where $Q$ contains orthonormal eigenvectors for the largest $\min\{d_h-d_p,d_p\}$ eigenvalues of $\operatorname{Cov}(R_t)$.
\end{theorem}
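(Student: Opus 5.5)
We reduce the problem to a low-rank approximation statement about the residual $R_t$. Write $D:=[f_{X_1,t}^{\ast}(Y_t)]_m=\ell_t(Y_{t,m})+R_t$.

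\textbf{Step 1: reparametrize the estimate.} Fix $h$ satisfying Assumption~\ref{ass:patch-recovery}. Then $Y_{t,m}=Wh+b$, so
\[
\widehat X_{1,m}^{(\varepsilon)}=Y_{t,m}-\sigma_t(W_\varepsilon h+b_\varepsilon)=(W-\sigma_tW_\varepsilon)h+(b-\sigma_tb_\varepsilon).
\]
Because $\sigma_t\ne0$, as $(W_\varepsilon,b_\varepsilon)$ ranges freely, $(W-\sigma_tW_\varepsilon,\,b-\sigma_tb_\varepsilon)$ ranges over all affine maps $\mathbb R^{d_h}\to\mathbb R^{d_p}$. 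The objective is therefore
\[
\inf_h\;\inf_{G,c}\,\mathbb E\|Gh+c-D\|^2 .
\]

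\textbf{Step 2: reduce to approximating $R_t$ with $d_h-d_p$ extra coordinates.} Let $\ell_t(y)=Ly+\ell_0$. Since $Y_{t,m}$ is an affine function of $h$, $\ell_t(Y_{t,m})$ is also affine in $h$. Subtracting it shows the inner problem equals $\inf_{G,c}\mathbb E\|Gh+c-R_t\|^2$.

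\emph{Lower bound.} Let $S$ be the span of the coordinates of $h$ together with constants, inside $L^2$. The constraint $Wh+b=Y_{t,m}$ forces $Y_{t,m}\in S^{d_p}$, where the components of $Y_{t,m}$ are linearly independent modulo constants because $\operatorname{Cov}(Y_{t,m})\succeq\sigma_t^2I$. Any affine function of $h$ can then be written as $G'Y_{t,m}+G''Z+c$, where $Z$ collects at most $k:=d_h-d_p$ further coordinates. (If $W$ has rank $d_p$, a change of basis of $h$ puts $Y_{t,m}$ among its coordinates.)

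By the definition of $\ell_t$ as the best affine estimate from $Y_{t,m}$, the residual $R_t$ is uncorrelated with $Y_{t,m}$ and has mean zero. I will verify this orthogonality from Appendix~\ref{app:affine-baseline}.

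Now take the $L^2$ projection of the error. We have $\mathbb E\|G'Y_{t,m}+G''Z+c-R_t\|^2\ge\mathbb E\|P Z'-R_t\|^2$ for some matrix $P$ of rank at most $k$ acting on the part $Z'$ of $Z$ orthogonal to $Y_{t,m}$. Minimizing over the centered random vector $V=PZ'$ with $\operatorname{rank}\operatorname{Cov}(V)\le k$ is the Eckart--Young/PCA problem for random vectors. Its value is $\sum_{j>k}\lambda_j(\operatorname{Cov}(R_t))$. To justify this, note that $\operatorname{span}(V)$ lies in a $k$-dimensional subspace $U$ of $\mathbb R^{d_p}$, so the error is at least $\mathbb E\|(I-\Pi_U)R_t\|^2=\operatorname{tr}((I-\Pi_U)\operatorname{Cov}R_t)\ge\sum_{j>k}\lambda_j$ by Ky Fan.

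The careful point is that $R_t$ is a function of the full image $Y_t$, so $h$ may encode it. The rank argument must therefore be done on the output side (the error lies in a $k$-dimensional image subspace) rather than through any information argument.

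\textbf{Step 3: achievability.} Take $h=(Y_{t,m},Q^\top R_t,0)$, where $Q$ has $q=\min\{k,d_p\}$ columns, and pad with zeros to width $d_h$. Assumption~\ref{ass:patch-recovery} then holds with $W=[I,0,0]$, $b=0$, and $\mathbb E\|h\|^2<\infty$ because $R_t\in L^2$ (finite second moment of $X_1$). Setting $G h+c=\ell_t(Y_{t,m})+QQ^\top R_t$, which is affine in $h$, gives error $\mathbb E\|(I-QQ^\top)R_t\|^2=\sum_{j>q}\lambda_j$. When $k\ge d_p$ this is $0$, matching the empty tail. The corresponding $W_\varepsilon,b_\varepsilon$ are recovered from Step 1 by solving $W-\sigma_tW_\varepsilon=G$.

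\textbf{Main obstacle.} I expect the main difficulty to be the lower bound in Step 2 in degenerate cases. These are when $W$ has rank less than $d_p$ (which is impossible, since $Y_{t,m}$ has nondegenerate covariance, so $W$ must have full row rank $d_p$) and when coordinates of $h$ are collinear with constants. I would handle both by working with the covariance of $h$ and the orthogonal decomposition in $L^2$ described above.
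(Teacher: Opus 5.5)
Your proposal is correct and follows essentially the same route as the paper. It uses the same reparametrization $C=W-\sigma_tW_\varepsilon$ of the output head and splits $h$ into the recovered patch plus $d_h-d_p$ extra coordinates (the paper writes this as $h=W^\dagger Y_{t,m}+Ng(Y_t)$ with $N$ spanning $\ker W$). It then uses $\operatorname{Cov}(R_t,Y_{t,m})=0$ to discard the affine patch term, applies the same Ky Fan bound on the variance captured by an output subspace of dimension at most $d_h-d_p$, and attains the bound with the identical representation $(Y_{t,m},Q^\top R_t,0)$. The only difference is the order of two steps: you orthogonalize the extra features against $Y_{t,m}$ in $L^2$ before projecting in output space, whereas the paper first projects onto $\operatorname{col}(CN)^\perp$ and then uses the vanishing cross term.
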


At $d_h=d_p$, the best affine clean readout is $\ell_t$, with approximation error $\mbb E\|R_t\|^2$.
Additional width allows the representation to capture leading covariance directions of $R_t$, reducing the error by their variance.
Thus fewer extra coordinates are needed when the residual variance concentrates in a small number of directions.
The clean-image loss $\mbb E\|\widehat X_{1,m}^{(\varepsilon)}(Y_t)-X_{1,m}\|^2$ equals this approximation error plus the irreducible Bayes risk, as shown in Appendix~\ref{app:proof-width}.

The same spectrum determines the smallest width needed to reach a chosen approximation tolerance.

\begin{corollary}[Width required at a fixed approximation tolerance]
\label{cor:joint-tolerance}
For $\delta\ge0$, the smallest width in the preceding representation class attaining $\mbb E\|\widehat X_{1,m}^{(\varepsilon)}(Y_t)-[f_{X_1,t}^{\ast}(Y_t)]_m\|^2\le\delta$ is
\[
 d_h=d_p+\min\left\{k\in\{0,\ldots,d_p\}:\;
 \sum_{j>k}\lambda_j\!\left(\operatorname{Cov}(R_t)\right)\le\delta\right\}.
\]
\end{corollary}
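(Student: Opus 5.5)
The corollary should follow directly from Theorem~\ref{thm:joint-width}, read as a statement about the function $E(d_h):=\sum_{j>d_h-d_p}\lambda_j(\operatorname{Cov}(R_t))$. Set $k=d_h-d_p$ and $g(k):=\sum_{j>k}\lambda_j(\operatorname{Cov}(R_t))$. The infimum in \eqref{eq:joint-width-tail} equals $g(k)$. Since $\operatorname{Cov}(R_t)$ is a positive semidefinite $d_p\times d_p$ matrix, all of its eigenvalues are nonnegative and there are only $d_p$ of them. Hence $g$ is nonincreasing in $k$, and $g(k)=0$ for every $k\ge d_p$.

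The first step is \textbf{existence}. Let $k^\ast$ be the minimum in the displayed formula. The set over which we minimize is nonempty, because $g(d_p)=0\le\delta$. So $k^\ast\in\{0,\dots,d_p\}$ is well defined. At width $d_h=d_p+k^\ast$, Theorem~\ref{thm:joint-width} states that the infimum is attained, by the representation $h(Y_t)=(Y_{t,m},Q^\top R_t,0)$ with suitable $W_\varepsilon,b_\varepsilon$. That representation belongs to the class, since it satisfies Assumption~\ref{ass:patch-recovery} and has finite second moment. Its error is exactly $g(k^\ast)\le\delta$. Thus width $d_p+k^\ast$ achieves the tolerance.

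The second step is \textbf{minimality}. Take any width $d_h<d_p+k^\ast$ for which some representation in the class attains error at most $\delta$.
\begin{itemize}
\item If $d_h\ge d_p$, then $k=d_h-d_p<k^\ast$, so $g(k)>\delta$ by the definition of $k^\ast$. But Theorem~\ref{thm:joint-width} gives error at least the infimum $g(k)>\delta$, a contradiction.
\item If $d_h<d_p$, no $h\in\mbb R^{d_h}$ can satisfy Assumption~\ref{ass:patch-recovery}. The map $h\mapsto Wh+b$ has an image that is an affine subspace of dimension less than $d_p$. However, $Y_{t,m}=X_{1,m}+\sigma_t\varepsilon_m$ has a nondegenerate Gaussian component, so it lies in no proper affine subspace with probability one. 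The class at such widths is therefore empty.
\end{itemize}

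The main subtlety is this second case, widths below $d_p$, which Theorem~\ref{thm:joint-width} does not address. I would handle it with the support argument above: $\operatorname{Cov}(Y_{t,m})\succeq\sigma_t^2 I_{d_p}$ is nonsingular, so $Y_{t,m}$ cannot be almost surely confined to an affine set of dimension less than $d_p$. A related point is that attainment, not just the value of the infimum, is needed at $k^\ast$ so that $\le\delta$ holds exactly. This is supplied by the attainment clause of Theorem~\ref{thm:joint-width}. If $k^\ast>d_p$ were possible, the zero padding would matter, but $k^\ast\le d_p$ always holds, so $\min\{d_h-d_p,d_p\}=k^\ast$.
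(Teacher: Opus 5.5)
Your proposal is correct and follows the paper's argument: apply Theorem~\ref{thm:joint-width} at width $d_p+k$, note that the tail sum is nonincreasing in $k$ and vanishes at $k=d_p$, and read off the smallest admissible $k$. Your separate handling of widths below $d_p$ (there the class is empty, which is also the rank-$d_p$ conclusion of Lemma~\ref{lem:affine-input-decomposition}) and of attainment at $k^\ast$ only makes explicit what the paper's short proof leaves implicit.
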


For zero approximation error, the required width is $d_p+\operatorname{rank}(\operatorname{Cov}(R_t))$, so width $2d_p$ always suffices within this representation class. 

\section{Experiments}
\label{sec:experiments}
We test our theoretical predictions with experiments on ImageNet.
Sections~\ref{sec:input} and~\ref{sec:conditioning-theory} show that an input bottleneck, arising when model width $d_h$ is smaller than patch dimension $d_p$ or the input passes through a narrow patch embedding, limits noise recovery (Theorem~\ref{prop:input}) and imposes a denoising-error floor (Corollary~\ref{cor:amplification}).
EDM-style preconditioning alleviates this discarded-noise contribution at high noise, while exact preconditioning eliminates it (Theorem~\ref{prop:bypass} and Corollary~\ref{cor:preconditioning-clean}).
To test these predictions, we use JiT-B/16 \citep{li2025jit} on ImageNet256 with a $768\to128\to768$ bottleneck patch embedding, comparing prediction targets and output preconditioning through FID-10K, fixed-time denoising loss, and noise-recovery probes.

Section~\ref{sec:joint-width} shows that when $d_h\ge d_p$, the additional model width needed to approximate the population-optimal denoiser is determined by the covariance spectrum of its residual beyond the best affine denoiser, under Assumption~\ref{ass:patch-recovery} (Theorem~\ref{thm:joint-width} and Corollary~\ref{cor:joint-tolerance}).
This residual describes the correction beyond linear denoising needed at each time $t$.
To examine this width requirement, we remove the bottleneck embedding and widen noise predictors from $d_h=d_p$ to $2d_p$, using ImageNet256 with $d_p=768$ and ImageNet128 with $d_p=192$.
We also probe affine recovery of the noisy input from the hidden representation to assess empirical support for Assumption~\ref{ass:patch-recovery}.
Finally, we use a pretrained ADM denoiser \citep{dhariwal2021adm} as a proxy for the population-optimal denoiser to estimate the residual spectrum directly.

Both suites use $X_t=tX_1+(1-t)\varepsilon$, so $\sigma_t=(1-t)/t$ and small $t$ means high noise.
Denoising, probe, and residual-spectrum measurements use the ImageNet validation split, held out from backbone training.
Experimental details and additional results are provided in Appendix~\ref{app:experiments}.

\subsection{Preconditioning lowers denoising floors under input bottlenecks}
\label{sec:exp-ablation}
\label{sec:exp-shortcut}
\label{sec:exp-probe}
\textbf{EDM alleviates the noise-transmission burden; exact preconditioning cancels it.}
Noise prediction (\emph{Noise}) performs poorly under the input bottleneck, reaching FID $412.11$, compared with $9.13$ for image prediction (\emph{Image}) and $177.20$ for velocity prediction (\emph{Velocity}; Figure~\ref{fig:bottleneck-performance}a).
EDM preconditioning substantially alleviates this gap: the JiT \citep{li2025jit} and Simple ReFlow \citep{kim2025simplereflow} variants (\emph{EDM: JiT} and \emph{EDM: ReFlow}), which differ only in input scaling, both reach approximately $52.28$.
Simply supplying the noisy input through an additive shortcut (\emph{Additive}), which adds $X_t$ to the raw noise-network output, gives FID $181.09$.
Exact preconditioning (\emph{Exact}) instead reaches $9.15$ and makes the learned branch image-prediction-equivalent away from endpoint clamps.
Allowing the skip and output coefficients to be learned as functions of $t$ gives FID $9.56$, $14.99$, and $15.13$ for three variants with different initial coefficient values (\emph{Learned 1}, \emph{Learned 2}, and \emph{Learned 3}, respectively).
The learned skip approaches the exact-preconditioning condition at moderate and high noise (Figure~\ref{fig:learned-coefficients}a; Appendix~\ref{app:exp-learned-coefficients}).

\textbf{The predicted denoising floor helps explain the generation differences.}
Our theory suggests one explanation for these generation differences through a lower bound on denoising loss. 
With $r_p/d_p\le128/768=1/6$, Corollaries~\ref{cor:amplification} and~\ref{cor:preconditioning-clean} give
\[
 \frac{1}{d}\mbb E\|\widehat X_1^{(T)}(Y_t)-X_1\|^2
 \ge \tilde c_{\mathrm{skip}}^2\sigma_t^2\left(1-\frac{r_p}{d_p}\right)
 \ge \frac{5}{6}\tilde c_{\mathrm{skip}}^2\sigma_t^2.
\]
Noise prediction has $\tilde c_{\mathrm{skip}}=1$, yielding the floor $5\sigma_t^2/6$.
Exact preconditioning has $\tilde c_{\mathrm{skip}}=0$ and eliminates this discarded-noise contribution.
For EDM, $\tilde c_{\mathrm{skip}}=\sigma_{\mathrm{data}}^2/(\sigma_t^2+\sigma_{\mathrm{data}}^2)$: the contribution is most pronounced at low to intermediate noise, and vanishes as $\sigma_t\to\infty$.

These predicted floors are visible in the measured denoising losses (Figure~\ref{fig:bottleneck-performance}b).
At $t=0.05$ ($\sigma_t=19$), noise prediction has MSE $300.836$, close to the floor $300.833$; at $t=0.5$, the corresponding values are $0.83387$ and $0.83333$.
Image prediction and exact preconditioning instead give MSE $0.11016$ and $0.11019$ at $t=0.05$, and $0.01570$ and $0.01571$ at $t=0.5$.
For both EDM variants, the intermediate-noise floor is reflected in a higher loss: at $t=0.5$, their MSE is approximately $0.04747$, with a floor of $0.03333$.
At high noise, $t=0.05$, their MSE is approximately $0.1103$, close to the losses for image prediction and exact preconditioning as the EDM floor becomes negligible.
Figure~\ref{fig:bottleneck-loss-grid} compares the full loss grid for all variants.

\begin{figure}[!htbp]
\centering
\setlength{\abovecaptionskip}{3pt}
\includegraphics[width=0.9\linewidth]{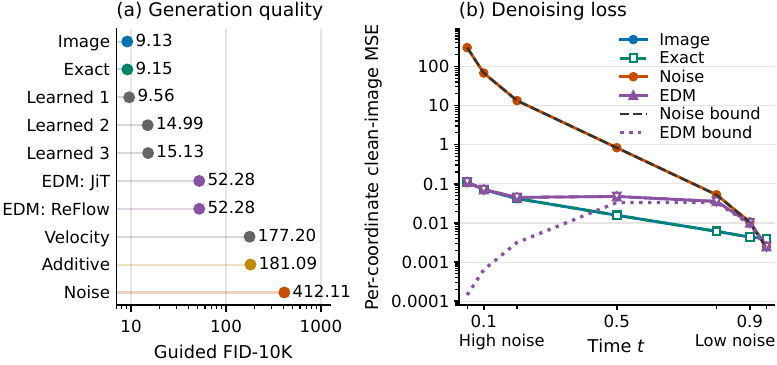}
\caption{Generation and denoising under the same input bottleneck, at epoch $200$.
(a) Guided FID-10K ($\downarrow$).
(b) Per-coordinate clean-image MSE and the discarded-noise lower bounds from Corollaries~\ref{cor:amplification} and~\ref{cor:preconditioning-clean}. 
Both EDM input scalings are plotted and nearly coincide. 
Noise prediction follows its growing floor; the EDM floor becomes negligible at high noise, where its loss approaches those for image prediction and exact preconditioning. 
FID and MSE use logarithmic scales; panel (b) uses time $t$, with $\sigma_t=(1-t)/t$. }
\label{fig:bottleneck-performance}
\end{figure}

\begingroup
\setlength{\intextsep}{4pt}
\setlength{\abovecaptionskip}{3pt}
\setlength{\belowcaptionskip}{0pt}
\Needspace{2.0in}
\begin{wrapfigure}{r}{0.38\textwidth}
\centering
\includegraphics[width=\linewidth]{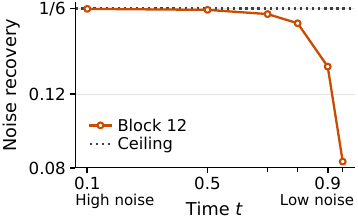}
\small
\caption{Block-$12$ noise recovery approaches but stays below the $1/6$ ceiling at high noise, consistent with Theorem~\ref{prop:input}.}
\label{fig:noise-ceiling}
\end{wrapfigure}
\textbf{Limited noise recovery makes the noise-transmission burden explicit.}
We fit affine noise probes to the noise predictor's embedding and block outputs at six times.
For a probe estimate $\widehat\varepsilon$, population explained variance is $\mathrm{EV}[\varepsilon]=1-\mbb E\|\varepsilon-\widehat\varepsilon\|^2/d$, since standard Gaussian noise has total variance $d$.
The numerator is the noise-recovery risk in Theorem~\ref{prop:input}; its floor $d-r$ gives the ceiling $\mathrm{EV}[\varepsilon]\le r/d\le1/6$.
We report the empirical counterpart using the test-noise variance.
The theorem also predicts that the optimal score approaches $r/d$ at high noise, as recovery in the retained directions becomes easier.
Figure~\ref{fig:noise-ceiling} shows the final block.
Consistently, all $78$ scores stay below $1/6$, and final-block recovery rises from $0.0833$ at low noise ($t=0.95$) to $0.1665$ at high noise ($t=0.1$).
The discarded noise still imposes the denoising floor. Table~\ref{tab:exp-probe-200-predeps} gives all layerwise scores.
\par\WFclear
\endgroup

\subsection{Widening improves denoising and affine input recovery}
\label{sec:exp-hidden-width}
We now consider noise predictors with model width $d_h\ge d_p$ and a single linear patch embedding that can preserve full input rank.
ImageNet256 with $16\times16$ RGB patches has $d_p=768$ and $256$ image tokens. 
Image-prediction controls also include widths below $d_p$. Depth and the image-loss convention remain fixed; width changes the parameter and attention-head counts.

\begin{figure}[!htbp]
\centering
\setlength{\abovecaptionskip}{3pt}
\includegraphics[width=\linewidth]{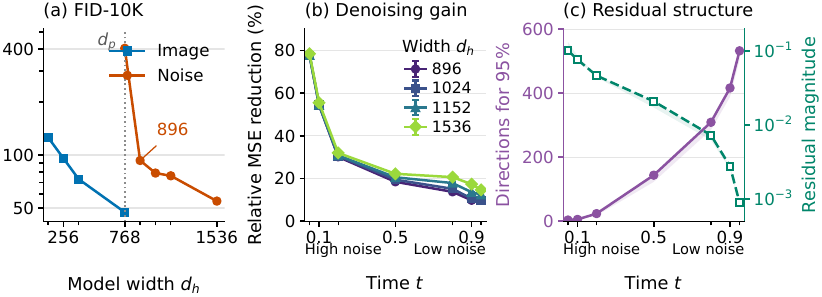}
\caption{Model width and denoiser approximation on ImageNet256. (a) Generation quality versus model width; the dotted line marks $d_p=768$. (b) Relative clean-image MSE reduction from width $768$. (c) Residual structure of a proxy for the population-optimal denoiser: purple circles give the $95\%$-variance direction count (left axis), and dashed green squares give residual magnitude (right axis, log scale). Both curves show the central patch; shading spans patch positions.}
\label{fig:width-performance}
\end{figure}

\Needspace{1.9in}
\begingroup
\setlength{\intextsep}{4pt}
\setlength{\abovecaptionskip}{3pt}
\setlength{\belowcaptionskip}{0pt}
\begin{wrapfigure}{r}{0.35\textwidth}
\centering
\includegraphics[width=\linewidth]{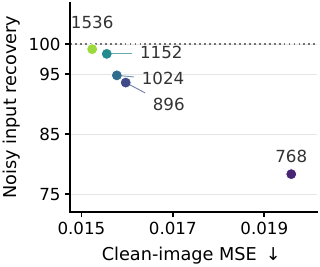}
\small
\caption{At $t=0.5$; labels: $d_h$.}
\label{fig:affine-width}
\end{wrapfigure}
\textbf{Generation and denoising improve immediately above $d_h=d_p$, with stronger affine input recovery.}
At epoch $200$, increasing the noise predictor's width from $768$ to $896$, still below $2d_p$, reduces ImageNet256 FID from $404.03$ to $93.11$ (Figure~\ref{fig:width-performance}a). Further widening gives smaller FID improvements, reaching $54.71$ at width $1536$.

These generation gains are accompanied by lower denoising loss and stronger affine noisy-input recovery from the final representation $h$ (Figure~\ref{fig:affine-width}).
At $t=0.5$, increasing width from $768$ to $896$ reduces clean-image MSE from $0.01959$ to $0.01597$ and raises noisy-input recovery from $78.34\%$ to $93.60\%$.
At width $1536$, MSE is $0.01524$ and input recovery is $99.18\%$.

This supports the input-preserving representation in Assumption~\ref{ass:patch-recovery}.
ImageNet128 shows the same association (Figures~\ref{fig:width-performance-i128}a and~\ref{fig:width-probes}b,d); Figure~\ref{fig:width-probes} shows image-prediction controls at both resolutions.
\par\WFclear
\endgroup

\textbf{Further widening gives larger relative denoising gains at low noise.}
To understand the sharp generation improvement from width $768$ to $896$ and the smaller gains from further widening, we measure relative denoising improvements at each $t$.
Recall that Theorem~\ref{thm:joint-width} characterizes how the best attainable approximation to the population-optimal denoiser improves with width under Assumption~\ref{ass:patch-recovery}.
For trained models, we estimate such approximation improvement by comparing their clean-image losses, since at a fixed $t$, the population identity gives
\[
 \underbrace{\mbb E\!\left[\|\widehat X_{1,d_p}^{(\varepsilon)}-X_1\|^2-\|\widehat X_{1,d_h}^{(\varepsilon)}-X_1\|^2\right]}_{\text{denoising-loss reduction (estimated)}}
 =\underbrace{\mbb E\!\left[\|\widehat X_{1,d_p}^{(\varepsilon)}-f_{X_1,t}^{\ast}\|^2-\|\widehat X_{1,d_h}^{(\varepsilon)}-f_{X_1,t}^{\ast}\|^2\right]}_{\text{population-denoiser approximation improvement}},
\]
where model width is indicated by an additional subscript and the common argument $Y_t$ is suppressed.
Figure~\ref{fig:width-performance}b reports the measured reduction as a percentage of the width-$768$ model's clean-image MSE at the same $t$.
At high noise ($t=0.05$), width $896$ already gives a $77.7\%$ reduction, and width $1536$ gives $78.4\%$.
At low noise ($t=0.9$), the corresponding reductions increase from $9.8\%$ to $17.4\%$.
Thus most of the high-noise benefit is obtained in the first step above $d_h=d_p$, while further widening gives a clearer relative benefit at low noise.
Measured from width $896$, widening to $1536$ reduces MSE by $3.2\%$ at $t=0.05$ and $8.4\%$ at $t=0.9$.
Figure~\ref{fig:width-performance-i128}b gives the ImageNet128 counterpart; Tables~\ref{tab:width-loss-grid-256} and~\ref{tab:width-loss-grid-128} give the full absolute-loss grids at both resolutions.

\textbf{Low-noise denoising is closer to linear, but needs corrections from many coordinates.}
Why does further widening give larger relative denoising gains at low noise? To investigate this, we inspect the covariance spectrum of the residual in Section~\ref{sec:joint-width}.
Under Assumption~\ref{ass:patch-recovery}, Theorem~\ref{thm:joint-width} and Corollary~\ref{cor:joint-tolerance} connect the concentration of this spectrum to the additional model width needed to approximate the denoiser.
A residual whose variance is spread over more directions requires more additional coordinates to capture the same fraction of that variance.
We use a pretrained ADM denoiser \citep{dhariwal2021adm} as a proxy for $f_{X_1,t}^{\ast}$, and estimate the best affine patch denoiser $\ell_t$ from clean-patch means and covariances.
Based on this, writing $\lambda_1\ge\cdots\ge\lambda_{d_p}\ge0$ for the eigenvalues of $\operatorname{Cov}(R_t)$ and taking $k\in\{0,\ldots,d_p\}$, we estimate
\[
 \text{Residual magnitude}=\frac1{d_p}\sum_{j=1}^{d_p}\lambda_j,
 \qquad
 \text{$95\%$ directions}=\min\left\{k:\sum_{j=1}^k\lambda_j\ge0.95\sum_{j=1}^{d_p}\lambda_j\right\}.
\]
Figure~\ref{fig:width-performance}c shows that the residual becomes smaller at low noise: its variance per coordinate falls by a factor of $27.7$ between $t=0.1$ and $t=0.9$.
This is consistent with the low-noise near-identity behavior reported by \citet[Appendix~D]{li2024hiddengaussian}.
At the same time, the residual becomes more dispersed: $6$ directions capture $95\%$ of its variance at high noise ($t=0.1$), compared with $416$ at low noise ($t=0.9$).
The same contrast holds on ImageNet128 (Figure~\ref{fig:width-performance-i128}c); Table~\ref{tab:adm-denoising-quality} compares the reference denoiser's accuracy with JiT.
This pattern supports a spectral interpretation of the further low-noise denoising gains from widening.

Before closing this section, we remark on the original DiT scaling experiments.
DiT-S/8 and DiT-B/8 have the same depth of $12$, with widths $384$ and $768$ and latent patch dimension $d_p=256$ \citep[Section~3.2, Table~1, and Figure~6]{peebles2023dit}.
This comparison crosses $2d_p=512$, and the improved generative performance is qualitatively consistent with additional width supporting corrections beyond linear denoising under Assumption~\ref{ass:patch-recovery}.

\section{Conclusion}
\label{sec:conclusion}
Our analysis has several limitations.
Our width characterization relies on Assumption~\ref{ass:patch-recovery}: the noisy patch must be exactly recoverable through an affine readout of the representation.
This describes one representation strategy rather than a necessary property of all successful noise predictors.
Within our unrestricted representation class, width $2d_p$ already suffices to represent the population-optimal denoiser.
Further improvements in generation from widening beyond $2d_p$ are not characterized by our current denoising-risk theory; explaining these gains is left for future work.
Moreover, although our experiments use networks shared across noise levels and evaluate both denoising and generation, the theoretical analysis treats each noise level separately, leaving open how parameter sharing changes the attainable tradeoffs.
It also characterizes denoising accuracy without establishing how errors propagate through sampling or translate into differences between generated and target distributions.

\section*{Reproducibility statement}
The repository at \url{https://github.com/hun-learning94/noiseburden} provides training and evaluation code, saved numerical results, and scripts to regenerate all tables and Figures~\ref{fig:bottleneck-performance}--\ref{fig:width-probes}. 
It includes run configurations, checkpoint identities, and instructions for supplying the external data and model weights. 
Appendix~\ref{app:proofs} contains the proofs of all theorems and corollaries, including the supporting lemmas and derivations. 
Appendix~\ref{app:experiments} documents the training and generation settings, held-out evaluation protocols, probe procedures, residual-spectrum estimation, and protocol qualifications.

\section*{AI use statement}
Generative AI tools assisted with reviewing the literature, improving the presentation of the work, developing and cleaning up code, carrying out experiments, refining proofs, providing manuscript feedback, and editing the text. 
The authors independently reviewed and verified all AI-assisted results, made the final scientific and methodological decisions, and take full responsibility for the contents of this paper.
\bibliography{references}
\bibliographystyle{iclr2027_conference}
\appendix
% Compact appendix spacing; retain the template text area and font sizes.
\raggedbottom
\setlength{\parskip}{3pt plus 1pt minus 0.5pt}
\setlength{\abovedisplayskip}{5pt plus 1pt minus 1pt}
\setlength{\belowdisplayskip}{5pt plus 1pt minus 1pt}
\setlength{\abovedisplayshortskip}{2pt plus 1pt}
\setlength{\belowdisplayshortskip}{3pt plus 1pt minus 1pt}
\setlength{\jot}{2pt}
\setlength{\topsep}{2pt plus 1pt minus 1pt}
\setlength{\partopsep}{0pt}
\setlength{\floatsep}{8pt plus 1pt minus 1pt}
\setlength{\textfloatsep}{10pt plus 2pt minus 2pt}
\setlength{\intextsep}{8pt plus 1pt minus 1pt}
\setlength{\abovecaptionskip}{4pt}
\setlength{\belowcaptionskip}{3pt}
\setcounter{topnumber}{4}
\setcounter{bottomnumber}{3}
\setcounter{totalnumber}{6}
\renewcommand{\topfraction}{0.95}
\renewcommand{\bottomfraction}{0.95}
\renewcommand{\textfraction}{0.05}
\renewcommand{\floatpagefraction}{0.8}
\makeatletter
\setlength{\@fptop}{0pt}
\setlength{\@fpsep}{8pt plus 1pt minus 1pt}
\setlength{\@fpbot}{0pt plus 1fil}
\renewcommand{\section}{\@startsection{section}{1}{\z@}
  {-1.6ex plus -0.3ex minus -0.2ex}{0.8ex plus 0.2ex}
  {\large\scshape\raggedright}}
\renewcommand{\subsection}{\@startsection{subsection}{2}{\z@}
  {-1.2ex plus -0.3ex minus -0.2ex}{0.5ex plus 0.1ex}
  {\normalsize\scshape\raggedright}}
\renewcommand{\subsubsection}{\@startsection{subsubsection}{3}{\z@}
  {-1.0ex plus -0.2ex minus -0.1ex}{0.3ex plus 0.1ex}
  {\normalsize\scshape\raggedright}}
\makeatother

\clearpage
\addtocontents{toc}{\protect\setcounter{tocdepth}{3}}
\begingroup
\renewcommand{\contentsname}{Appendix contents}
\setlength{\parskip}{0pt}
\tableofcontents
\endgroup

\section{Proofs}
\label{app:proofs}
All arguments use the model and conventions of Section~\ref{sec:setup}. We work at the fixed time of the corresponding statement and suppress this conditioning when it is unambiguous.

\subsection{Input bottlenecks}
\label{app:input-proofs}

\begin{lemma}[Projected clean-image risk under an input bottleneck]
\label{lem:compressed-clean-risk}
Under the assumptions of Section~\ref{sec:input}, for any deterministic orthogonal projector $P_A$,
\begin{equation}
 0\le\mbb E\|P_A\{X_1-\mbb E[X_1\mid AY_t,t]\}\|^2
 \le\operatorname{tr}\big(P_A\operatorname{Cov}(X_1)\big).
 \label{eq:appendix-projected-variance}
\end{equation}
The bound is uniform in the noise level.
\end{lemma}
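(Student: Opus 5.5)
The plan is to write $P_A X_1$ as the target and use the optimality of the conditional mean among measurable functions of $AY_t$. Set $Z:=AY_t$ and $\mu:=\mbb E[X_1]$, which is finite because $\mbb E\|X_1\|^2<\infty$. The lower bound is immediate, since the quantity is the expectation of a squared norm. For the upper bound, note that $P_A$ is a deterministic linear map. Conditional expectation commutes with it, so $P_A\,\mbb E[X_1\mid Z,t]=\mbb E[P_AX_1\mid Z,t]$ almost surely. The left-hand side of \eqref{eq:appendix-projected-variance} is therefore the minimal mean-square error for predicting $P_AX_1$ from $Z$.

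The key step is the $L^2$ projection property of conditional expectation. Among all square-integrable $\sigma(Z)$-measurable estimators $g(Z)$, the conditional mean minimizes $\mbb E\|P_AX_1-g(Z)\|^2$. We compare it against the constant estimator $g\equiv P_A\mu$, which is trivially $\sigma(Z)$-measurable. This gives
\[
\mbb E\|P_A\{X_1-\mbb E[X_1\mid Z,t]\}\|^2\le \mbb E\|P_A(X_1-\mu)\|^2=\operatorname{tr}\big(P_A\operatorname{Cov}(X_1)P_A^\top\big).
\]
Because $P_A$ is an orthogonal projector, $P_A^\top P_A=P_A$. Cyclicity of the trace then reduces the right side to $\operatorname{tr}(P_A\operatorname{Cov}(X_1))$. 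An equivalent route is the Pythagorean decomposition $\mbb E\|P_A(X_1-\mu)\|^2=\mbb E\|P_A(X_1-\mbb E[X_1\mid Z])\|^2+\mbb E\|P_A(\mbb E[X_1\mid Z]-\mu)\|^2$, where the cross term vanishes by the tower property.

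Uniformity in the noise level holds because the bound $\operatorname{tr}(P_A\operatorname{Cov}(X_1))$ involves only the clean-data covariance and the fixed projector. It does not depend on $\sigma_t$, on $A$ beyond the fixed choice of $P_A$, or on $t$. The only technical point, and the main one to check carefully, is integrability. The vector $P_AX_1$ lies in $L^2$ because $\|P_AX_1\|\le\|X_1\|$. Hence $\mbb E[P_AX_1\mid Z]$ is well defined and is the $L^2$-orthogonal projection onto $L^2(\sigma(Z))$. This is what justifies the minimization property and the vanishing cross term. No structure of the Gaussian noise is needed.
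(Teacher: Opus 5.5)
Your proposal is correct and follows essentially the same argument as the paper. Both identify $P_A\mbb E[X_1\mid AY_t,t]$ as the optimal predictor of $P_AX_1$ from $AY_t$, bound its risk by that of the constant predictor $P_A\mbb E X_1$, and use $P_A^\top P_A=P_A$ to reduce that risk to $\operatorname{tr}(P_A\operatorname{Cov}(X_1))$; your integrability remarks and the Pythagorean alternative are sound additions.
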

\begin{proof}
The conditional mean $P_A\mbb E[X_1\mid AY_t,t]$ minimizes the squared error for predicting $P_AX_1$ from $AY_t$.
Its risk is therefore at most that of the constant predictor $P_A\mbb E X_1$, namely $\mbb E\|P_A(X_1-\mbb E X_1)\|^2=\operatorname{tr}(P_A\operatorname{Cov}(X_1))$, where the equality uses $P_A^\top P_A=P_A$.
\end{proof}

\subsubsection{Proof of Theorem~\ref{prop:input}}
\label{app:proof-input}
\begin{proof}
Let $P_A$ be the orthogonal projector onto the row space of $A$.
We will show that the optimal noise-recovery risk consists of a discarded-direction contribution $d-r$ and a retained-direction contribution of order $\sigma_t^{-2}$.
The conditional mean $\mbb E[\varepsilon\mid AY_t,t]$ attains the minimum squared risk, and the retained and discarded subspaces are orthogonal, so
\begin{align*}
 \inf_{f_{\varepsilon,t}\in\mcl F}\mbb E\|\varepsilon-f_{\varepsilon,t}(AY_t)\|^2
 &=\mbb E\big\|(I-P_A)\varepsilon-\mbb E[(I-P_A)\varepsilon\mid AY_t,t]\big\|^2\\
 &\quad+\mbb E\big\|P_A\varepsilon-\mbb E[P_A\varepsilon\mid AY_t,t]\big\|^2.
\end{align*}

For the first term, the projected Gaussian vectors $(I-P_A)\varepsilon$ and $P_A\varepsilon$ are independent because their cross covariance is zero.
Since $X_1$ is independent of $\varepsilon$, the discarded noise is also independent of $(X_1,P_A\varepsilon)$, and hence of $(X_1,AY_t)$ because $AY_t=AX_1+\sigma_t AP_A\varepsilon$.
It follows that
\[
 \mbb E[(I-P_A)\varepsilon\mid AY_t,t]=0,
 \qquad
 \mbb E\|(I-P_A)\varepsilon\|^2
 =\operatorname{tr}(I-P_A)=d-r.
\]
Thus the first term is exactly $d-r$ at every noise level.

For the second term, $P_A Y_t$ is determined by $AY_t$, and $P_A Y_t=P_A X_1+\sigma_t P_A\varepsilon$ gives
\[
 P_A\varepsilon-\mbb E[P_A\varepsilon\mid AY_t,t]
 =-\sigma_t^{-1}P_A\{X_1-\mbb E[X_1\mid AY_t,t]\}.
\]
The second term is therefore $\sigma_t^{-2}$ times the projected clean-image risk. By Lemma~\ref{lem:compressed-clean-risk}, it lies between zero and $\sigma_t^{-2}\operatorname{tr}(P_A\operatorname{Cov}(X_1))$.

Combining the two terms gives the exact decomposition
\[
 \inf_{f_{\varepsilon,t}\in\mcl F}\mbb E\|\varepsilon-f_{\varepsilon,t}(AY_t)\|^2
 =d-r+\sigma_t^{-2}\mbb E\big\|P_A\{X_1-\mbb E[X_1\mid AY_t,t]\}\big\|^2.
\]
The remainder is nonnegative and of order $\sigma_t^{-2}$, proving both the lower bound $d-r$ at every noise level and the high-noise expansion in \eqref{eq:input-risk}.
\end{proof}

\subsubsection{Proof of Corollary~\ref{cor:amplification}}
\label{app:proof-amplification}
\begin{proof}
For each decoder the induced clean error is exactly
\[
 \widehat X_1^{(\varepsilon)}(Y_t)-X_1
 =Y_t-\sigma_t f_{\varepsilon,t}(AY_t)-X_1
 =\sigma_t\{\varepsilon-f_{\varepsilon,t}(AY_t)\}.
\]
Multiplying the exact risk decomposition in the proof of Theorem~\ref{prop:input} by $\sigma_t^2$ therefore gives
\begin{align*}
 \inf_{f_{\varepsilon,t}}\mbb E\|\widehat X_1^{(\varepsilon)}(Y_t)-X_1\|^2
 &=\sigma_t^2(d-r)
 +\mbb E\big\|P_A\{X_1-\mbb E[X_1\mid AY_t,t]\}\big\|^2.
\end{align*}
By Lemma~\ref{lem:compressed-clean-risk}, the remainder belongs to the interval $\left[0,\operatorname{tr}(P_A\operatorname{Cov}(X_1))\right]$
uniformly in the noise level, which completes the proof.
\end{proof}

\subsection{Output preconditioning}
\label{app:preconditioning-proofs}
\subsubsection{Proof of Theorem~\ref{prop:bypass}}
\label{app:proof-preconditioning}
\begin{proof}
We will show that the optimal target risk consists of the discarded-noise contribution $(\beta_t-c_{\mathrm{skip}}\sigma_t)^2(d-r)$ and two nonnegative clean-image uncertainty terms.
Since $c_{\mathrm{in}},c_{\mathrm{out}}\ne0$, the scaled outputs $c_{\mathrm{out}}f_{T,t}(c_{\mathrm{in}}AY_t)$ range over all measurable functions of $AY_t$ with finite squared risk.
The optimum is therefore attained by the conditional mean of the residual target $T_t-c_{\mathrm{skip}}Y_t$ given $AY_t$.
Let $P_A$ project onto the row space of $A$. We split the optimal residual error into its retained and discarded components, whose squared norms add by orthogonality:
\begin{align*}
 \inf_{f_{T,t}\in\mcl F}\mbb E\|T_t-\widehat T_t\|^2
 &=\mbb E\big\|P_A\{T_t-c_{\mathrm{skip}}Y_t
       -\mbb E[T_t-c_{\mathrm{skip}}Y_t\mid AY_t,t]\}\big\|^2\\
 &\quad+\mbb E\big\|(I-P_A)\{T_t-c_{\mathrm{skip}}Y_t
       -\mbb E[T_t-c_{\mathrm{skip}}Y_t\mid AY_t,t]\}\big\|^2.
\end{align*}

For the retained component, $P_A Y_t$ is determined by $AY_t$, and substituting $P_A\varepsilon=(P_A Y_t-P_A X_1)/\sigma_t$ gives
\[
 P_A(T_t-c_{\mathrm{skip}}Y_t)
 =(\alpha_t-\beta_t/\sigma_t) P_A X_1
 +\left(\frac{\beta_t}{\sigma_t}-c_{\mathrm{skip}}\right)P_A Y_t.
\]
Its second term is observed, so subtracting the conditional mean leaves $(\alpha_t-\beta_t/\sigma_t)
 P_A\{X_1-\mbb E[X_1\mid AY_t,t]\}$, whose squared risk is a nonnegative clean-image uncertainty term.

For the discarded component, the proof of Theorem~\ref{prop:input} established that $(I-P_A)\varepsilon$ is independent of $(X_1,AY_t)$, has mean zero, and has expected squared norm $d-r$.
Consequently, subtracting the conditional mean of the discarded residual target gives
\begin{align*}
 &(I-P_A)\{T_t-c_{\mathrm{skip}}Y_t
       -\mbb E[T_t-c_{\mathrm{skip}}Y_t\mid AY_t,t]\}\\
 &\quad=(\alpha_t-c_{\mathrm{skip}})(I-P_A)
      \{X_1-\mbb E[X_1\mid AY_t,t]\}
      +(\beta_t-c_{\mathrm{skip}}\sigma_t)(I-P_A)\varepsilon.
\end{align*}
The clean uncertainty is a function of $(X_1,AY_t)$, so its expected inner product with the discarded noise is zero.
The squared risk of this component is therefore the sum of a nonnegative clean-image uncertainty term and $(\beta_t-c_{\mathrm{skip}}\sigma_t)^2(d-r)$.

Combining the retained and discarded contributions gives the exact optimal risk
\begin{align}
 \inf_{f_{T,t}\in\mcl F}\mbb E\|T_t-\widehat T_t\|^2
 &= (\beta_t-c_{\mathrm{skip}}\sigma_t)^2(d-r) +(\alpha_t-\beta_t/\sigma_t)^2
 \mbb E\big\|P_A\{X_1-\mbb E[X_1\mid AY_t,t]\}\big\|^2\nonumber\\
 &\quad+(\alpha_t-c_{\mathrm{skip}})^2
 \mbb E\big\|(I-P_A)\{X_1-\mbb E[X_1\mid AY_t,t]\}\big\|^2.
 \label{eq:preconditioning-risk}
\end{align}
The two clean-image uncertainty terms are nonnegative, proving the stated lower bound for every decoder.
By Lemma~\ref{lem:compressed-clean-risk}, their sum is bounded above by $\operatorname{tr}(\operatorname{Cov}(X_1))\big[(\alpha_t-\beta_t/\sigma_t)^2+(\alpha_t-c_{\mathrm{skip}})^2\big]$, proving the expansion in \eqref{eq:preconditioning-floor}; its implicit constant depends only on the clean-image covariance.
Note that setting $(\alpha_t,\beta_t)=(0,1)$ and $c_{\mathrm{skip}}=0$ recovers the exact noise-recovery decomposition used to prove Theorem~\ref{prop:input}.
\end{proof}

\subsubsection{Convergence of EDM's raw network target}
\label{app:edm-consequences}
\begin{theorem}[EDM's raw target approaches scaled image prediction]
\label{thm:edm-target-convergence}
Under the model of Section~\ref{sec:setup}, with EDM's coefficients from Section~\ref{sec:conditioning-theory} and fixed $\sigma_{\mathrm{data}}>0$,
\[
 \mbb E\left\|
 \frac{X_1-c_{\mathrm{skip}}Y_t}{c_{\mathrm{out}}}
 -\frac{X_1}{\sigma_{\mathrm{data}}}
 \right\|^2\longrightarrow0,
 \qquad \sigma_t\to\infty.
\]
\end{theorem}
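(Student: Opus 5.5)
We compute the raw target explicitly by substituting $Y_t=X_1+\sigma_t\varepsilon$ and the EDM coefficients. With $s:=\sigma_{\mathrm{data}}$ and $D:=\sigma_t^2+s^2$, we have $c_{\mathrm{skip}}=s^2/D$ and $c_{\mathrm{out}}=\sigma_t s/\sqrt D$, so
\[
 X_1-c_{\mathrm{skip}}Y_t=\Bigl(1-\frac{s^2}{D}\Bigr)X_1-\frac{s^2\sigma_t}{D}\varepsilon
 =\frac{\sigma_t^2}{D}X_1-\frac{s^2\sigma_t}{D}\varepsilon .
\]
Dividing by $c_{\mathrm{out}}$ gives
\[
 \frac{X_1-c_{\mathrm{skip}}Y_t}{c_{\mathrm{out}}}
 =\frac{\sigma_t}{s\sqrt D}X_1-\frac{s}{\sqrt D}\varepsilon .
\]
The difference from $X_1/s$ is therefore $\bigl(\frac{\sigma_t}{\sqrt D}-1\bigr)\frac{X_1}{s}-\frac{s}{\sqrt D}\varepsilon$. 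This is an exact affine combination of $X_1$ and $\varepsilon$ with deterministic coefficients.

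Next we take the expected squared norm. Since $\varepsilon$ is independent of $X_1$ and has mean zero, the cross term $\mbb E\langle X_1,\varepsilon\rangle$ vanishes; no centering of $X_1$ is needed. This leaves
\[
 \Bigl(1-\frac{\sigma_t}{\sqrt{\sigma_t^2+s^2}}\Bigr)^2\frac{\mbb E\|X_1\|^2}{s^2}
 +\frac{s^2 d}{\sigma_t^2+s^2}.
\]
Finiteness of $\mbb E\|X_1\|^2$ is assumed in Section~\ref{sec:setup}, so both terms are finite.

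Finally we show both terms vanish. The second is $O(\sigma_t^{-2})$. For the first, rationalize:
\[
 1-\frac{\sigma_t}{\sqrt{\sigma_t^2+s^2}}=\frac{s^2}{\sqrt{\sigma_t^2+s^2}\,\bigl(\sqrt{\sigma_t^2+s^2}+\sigma_t\bigr)}=O(\sigma_t^{-2}),
\]
so its square is $O(\sigma_t^{-4})$. Together this gives convergence at rate $O(\sigma_t^{-2})$, matching the decay of the discarded-noise term discussed after Theorem~\ref{prop:bypass}. There is no real obstacle; the only care needed is the algebra combining $c_{\mathrm{skip}}$ and $c_{\mathrm{out}}$, and the cross-term argument, which uses only independence and $\mbb E\varepsilon=0$.
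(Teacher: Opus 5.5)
Your proposal is correct and matches the paper's proof: both expand the raw target into deterministic multiples of $X_1$ and $\varepsilon$, then compute the mean-square difference exactly using independence and $\mathbb{E}\varepsilon=0$. Your rationalization step also makes the $O(\sigma_t^{-2})$ convergence rate explicit, which the paper only shows as convergence to zero.
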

\begin{proof}
The effective raw target admits the explicit expansion
\begin{align*}
 \frac{X_1-c_{\mathrm{skip}}Y_t}{c_{\mathrm{out}}}
 &=\frac{1-c_{\mathrm{skip}}}{c_{\mathrm{out}}}X_1
   -\frac{c_{\mathrm{skip}}\sigma_t}{c_{\mathrm{out}}}\varepsilon=\frac{\sigma_t}{\sigma_{\mathrm{data}}
              \sqrt{\sigma_t^2+\sigma_{\mathrm{data}}^2}}X_1
   -\frac{\sigma_{\mathrm{data}}}
              {\sqrt{\sigma_t^2+\sigma_{\mathrm{data}}^2}}\varepsilon.
\end{align*}
The first coefficient tends to $1/\sigma_{\mathrm{data}}$ and the second to zero. Independence and zero mean of $\varepsilon$ give the exact mean-square difference
\begin{align*}
 &\mbb E\left\|
  \frac{X_1-c_{\mathrm{skip}}Y_t}{c_{\mathrm{out}}}
  -\frac{X_1}{\sigma_{\mathrm{data}}}\right\|^2=\left(
    \frac{\sigma_t}{\sigma_{\mathrm{data}}
          \sqrt{\sigma_t^2+\sigma_{\mathrm{data}}^2}}
       -\frac{1}{\sigma_{\mathrm{data}}}\right)^2\mbb E\|X_1\|^2
       +\frac{d\sigma_{\mathrm{data}}^2}
                    {\sigma_t^2+\sigma_{\mathrm{data}}^2}
 \longrightarrow0.
\end{align*}
This proves the stated convergence to scaled image prediction.
\end{proof}

\subsubsection{Proof of Corollary~\ref{cor:preconditioning-clean}}
\label{app:proof-preconditioning-clean}
\begin{proof}
By \eqref{eq:preconditioning-conversion}, the converted clean-image estimate has the same preconditioned form, with coefficients $\tilde c_{\mathrm{skip}}$ and $\tilde c_{\mathrm{out}}$.
Apply Theorem~\ref{prop:bypass} with the image target $(\alpha_t,\beta_t)=(1,0)$ and these converted coefficients in place of $c_{\mathrm{skip}},c_{\mathrm{out}}$.
It gives the lower bound $\tilde c_{\mathrm{skip}}^2\sigma_t^2(d-r)$ for every decoder and the expansion
\[
 \inf_{f_{T,t}\in\mcl F}\mbb E\|\widehat X_1^{(T)}(Y_t)-X_1\|^2
 =\tilde c_{\mathrm{skip}}^2\sigma_t^2(d-r)
 +O\!\left(1+(1-\tilde c_{\mathrm{skip}})^2\right),
 \qquad \sigma_t\to\infty.
\]
If $\tilde c_{\mathrm{skip}}(t)$ stays bounded, the remainder is $O(1)$, proving \eqref{eq:preconditioning-clean-risk}.
\end{proof}

\subsection{Model width under affine input recovery}
\label{app:width-proofs}
\subsubsection{Derivation of the best affine denoising baseline}
\label{app:affine-baseline}
We seek the affine estimate $\ell_t(y)=Ly+u$ that minimizes $\mbb E\|X_{1,m}-LY_{t,m}-u\|^2$ at the fixed time.
Because $[f_{X_1,t}^{\ast}(Y_t)]_m=\mbb E[X_{1,m}\mid Y_t,t]$, the conditional-mean decomposition shows that replacing $X_{1,m}$ by $[f_{X_1,t}^{\ast}(Y_t)]_m$ subtracts only the irreducible Bayes risk from this objective.
Thus the best affine estimate of the clean patch is also the best affine approximation to its population-optimal denoiser.

For a fixed $L$, minimizing over $u$ makes the mean residual zero, giving $u=\mbb E[f_{X_1,t}^{\ast}(Y_t)]_m-L\mbb E Y_{t,m}$.
After substituting this intercept, the remaining problem is to fit the centered denoiser from the centered noisy patch.
Differentiating this squared-error objective with respect to $L$ gives the normal equation
\[
 L\operatorname{Cov}(Y_{t,m})
 =\operatorname{Cov}([f_{X_1,t}^{\ast}(Y_t)]_m,Y_{t,m}).
\]
Indeed, the residual $R_t=[f_{X_1,t}^{\ast}(Y_t)]_m-LY_{t,m}-u$ satisfies
\[
 \operatorname{Cov}(R_t,Y_{t,m})
 =\operatorname{Cov}([f_{X_1,t}^{\ast}(Y_t)]_m,Y_{t,m})
 -L\operatorname{Cov}(Y_{t,m})=0.
\]
Thus the optimal residual is uncorrelated with the noisy patch.
Since $\operatorname{Cov}(Y_{t,m})=\operatorname{Cov}(X_{1,m})+\sigma_t^2I_{d_p}\succ0$, it is invertible, so the unique optimal coefficients are
\begin{align}
 L&=\operatorname{Cov}([f_{X_1,t}^{\ast}(Y_t)]_m,Y_{t,m})\operatorname{Cov}(Y_{t,m})^{-1},\qquad
 u=\mbb E[f_{X_1,t}^{\ast}(Y_t)]_m-L\mbb E Y_{t,m}.
 \label{eq:affine-baseline-coefficients}
\end{align}
The tower property gives $\mbb E[f_{X_1,t}^{\ast}(Y_t)]_m=\mbb E X_{1,m}$ and $\operatorname{Cov}([f_{X_1,t}^{\ast}(Y_t)]_m,Y_{t,m})=\operatorname{Cov}(X_{1,m})$, so the coefficients depend only on the clean-patch mean and covariance and the noise level.
Consequently, the residual $R_t=[f_{X_1,t}^{\ast}(Y_t)]_m-\ell_t(Y_{t,m})$ is a square-integrable function of the whole observation $Y_t$, satisfying $\mbb E R_t=0$ and $\operatorname{Cov}(R_t,Y_{t,m})=0$.
Lastly, under Assumption~\ref{ass:patch-recovery}, the baseline $\ell_t(Y_{t,m})=LWh(Y_t)+Lb+u$ is already available as an affine readout of $h(Y_t)$.

\subsubsection{Feature decomposition under affine input recovery}
\label{app:feature-decomposition}
\begin{lemma}[Feature decomposition under affine input recovery]
\label{lem:affine-input-decomposition}
Let $h,W,b$ satisfy Assumption~\ref{ass:patch-recovery}.
Then $W$ has rank $d_p$.
After centering $h$ and $Y_{t,m}$ and using the same symbols for the centered variables, the representation can be written as
\[
 h=W^\dagger Y_{t,m}+Ng(Y_t),
 \qquad g(Y_t)\in\mbb R^{d_h-d_p},
\]
where $W^\dagger$ is a right inverse of $W$, the columns of $N$ form an orthonormal basis of $\ker W$, and $g$ is a square-integrable function of $Y_t$.
\end{lemma}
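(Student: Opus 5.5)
To show that $W$ has rank $d_p$, I would use a covariance argument. Taking covariances in $Wh(Y_t)+b=Y_{t,m}$, which holds with probability one, gives $W\operatorname{Cov}(h)W^\top=\operatorname{Cov}(Y_{t,m})=\operatorname{Cov}(X_{1,m})+\sigma_t^2I_{d_p}\succ0$. Both covariances are finite because $\mbb E\|h\|^2<\infty$ and $\mbb E\|X_1\|^2<\infty$. The left side has rank at most $\operatorname{rank}W$, so $\operatorname{rank}W=d_p$. As a consequence $d_h\ge d_p$, and $\ker W$ has dimension $d_h-d_p$.

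Next I would center the variables. Taking expectations gives $W\mbb E h+b=\mbb E Y_{t,m}$. Subtracting this from the recovery identity yields $W(h-\mbb E h)=Y_{t,m}-\mbb E Y_{t,m}$. So the centered variables satisfy $Wh=Y_{t,m}$ exactly, with no offset, and from here on I work only with centered quantities.

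For the decomposition, fix any right inverse $W^\dagger$, for instance the Moore--Penrose choice $W^\top(WW^\top)^{-1}$, which exists because $WW^\top$ is invertible. Let $N\in\mbb R^{d_h\times(d_h-d_p)}$ have orthonormal columns spanning $\ker W$. Then $W(h-W^\dagger Y_{t,m})=Y_{t,m}-Y_{t,m}=0$ almost surely, so the difference $h-W^\dagger Y_{t,m}$ lies in $\ker W$. Define $g(Y_t):=N^\top(h(Y_t)-W^\dagger Y_{t,m})$. Since $NN^\top$ is the orthogonal projector onto $\ker W$, we get $h-W^\dagger Y_{t,m}=NN^\top(h-W^\dagger Y_{t,m})=Ng$. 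Measurability of $g$ follows because it is a linear map applied to measurable functions of $Y_t$. Square-integrability follows from $\|g\|\le\|h\|+\|W^\dagger\|\,\|Y_{t,m}\|$ together with the finite second moments.

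There is no real obstacle here. The one point needing care is the rank claim, which relies on the noise making $\operatorname{Cov}(Y_{t,m})$ nonsingular. The remaining bookkeeping issue is that the identities hold almost surely rather than pointwise; I would handle it by defining $g$ directly through the projection formula, so the representation holds exactly wherever the recovery identity holds.
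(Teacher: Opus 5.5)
Your proposal is correct and matches the paper's proof essentially step for step: the identity $W\operatorname{Cov}(h)W^\top=\operatorname{Cov}(X_{1,m})+\sigma_t^2I_{d_p}\succ0$ forces $\operatorname{rank}W=d_p$, and $g=N^\top(h-W^\dagger Y_{t,m})$ gives the decomposition. You also spell out details the paper leaves implicit, namely the centering computation, $NN^\top$ as the projector onto $\ker W$, the almost-sure caveat, and the norm bound for square-integrability.
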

\begin{proof}
The affine recovery identity $Wh(Y_t)+b=Y_{t,m}$ gives $\operatorname{Cov}(Y_{t,m})=W\operatorname{Cov}(h(Y_t))W^\top$.
Since $\operatorname{Cov}(Y_{t,m})=\operatorname{Cov}(X_{1,m})+\sigma_t^2I_{d_p}\succ0$, $W$ has rank $d_p$ and $\ker W$ has dimension $d_h-d_p$.
Thus a right inverse $W^\dagger$ exists.

For the centered variables, $Wh=Y_{t,m}$, so $W(h-W^\dagger Y_{t,m})=0$.
The difference $h-W^\dagger Y_{t,m}$ therefore lies in $\ker W$ and can be written as $Ng(Y_t)$, with $g(Y_t)=N^\top(h-W^\dagger Y_{t,m})$.
This gives the claimed decomposition.
The function $g$ is square-integrable because it is a fixed linear combination of the square-integrable variables $h$ and $Y_{t,m}$.
Thus affine recovery separates the representation into a linear function of the noisy patch and $d_h-d_p$ additional feature coordinates.
\end{proof}

\subsubsection{Proof of Theorem~\ref{thm:joint-width}}
\label{app:proof-width}
\begin{proof}
We will first show that every admissible representation has approximation error at least the spectral tail in \eqref{eq:joint-width-tail}, and then construct one that attains this bound.
The key is to separate the features needed for affine recovery of the noisy patch from the remaining $d_h-d_p$ feature coordinates, and determine how many output directions those remaining coordinates can supply.

First, we express the problem in terms of a clean-image readout.
Under Assumption~\ref{ass:patch-recovery}, the noise-output head in \eqref{eq:network-hidden-readout} induces $\widehat X_{1,m}^{(\varepsilon)}(Y_t)=Ch(Y_t)+c$, where $C=W-\sigma_tW_{\varepsilon}$ and $c=b-\sigma_tb_{\varepsilon}$.
Conversely, any $C,c$ determine a noise-output head through $W_{\varepsilon}=(W-C)/\sigma_t$ and $b_{\varepsilon}=(b-c)/\sigma_t$.
Thus optimizing the noise-output layer is equivalent to optimizing an arbitrary affine clean readout of the same representation.
We therefore use $C,c$ to study its approximation error directly.

For the lower bound, fix any admissible representation and center $h$, $Y_{t,m}$, and the population denoiser, absorbing their means into the affine intercepts and using the same symbols for the centered variables.
Since any constant mean deviation contributes an additional nonnegative squared norm, a lower bound for the centered error also bounds the original error.
Lemma~\ref{lem:affine-input-decomposition} decomposes such $h$ into the recovered input and $d_h-d_p$ additional feature coordinates.
Applying the clean readout to this decomposition gives
\[
 Ch=CW^\dagger Y_{t,m}+CN g(Y_t).
\]
The first term is linear in the noisy patch.
The second can use arbitrary nonlinear features of the whole image, but its output always lies in $S=\operatorname{col}(CN)$, whose dimension is at most $d_h-d_p$.
In patch-output space $\mbb R^{d_p}$, $S$ contains the directions that the additional features can affect through the readout $CN$.

Since the additional features contribute only within $S$, we obtain a lower bound by examining the prediction error in $S^\perp$, where their contribution vanishes regardless of how expressive $g$ is.
The orthogonal complement $S^\perp$ consists of the patch-output directions perpendicular to every direction in $S$, whose predicted components the additional features cannot change.
Write the centered population denoiser as $[f_{X_1,t}^{\ast}(Y_t)]_m=LY_{t,m}+R_t$, using the affine baseline from Appendix~\ref{app:affine-baseline}.
The approximation error is then
\[
 Ch-[f_{X_1,t}^{\ast}(Y_t)]_m
 =(CW^\dagger-L)Y_{t,m}+CN g(Y_t)-R_t.
\]
Projecting onto $S^\perp$ removes $CN g(Y_t)$.
The remaining affine patch term cannot cancel the projected residual in mean square, because $\operatorname{Cov}(R_t,Y_{t,m})=0$.
Writing $P_{S^\perp}$ for the orthogonal projector onto $S^\perp$, we obtain
\begin{align*}
 \mbb E\|Ch-[f_{X_1,t}^{\ast}(Y_t)]_m\|^2
 &\ge\mbb E\|P_{S^\perp}\{(CW^\dagger-L)Y_{t,m}-R_t\}\|^2\\
 &=\mbb E\|P_{S^\perp}(CW^\dagger-L)Y_{t,m}\|^2
   +\mbb E\|P_{S^\perp}R_t\|^2\\
 &\ge\operatorname{tr}\big(P_{S^\perp}\operatorname{Cov}(R_t)\big).
\end{align*}
The equality uses the vanishing cross term; the last inequality drops the nonnegative error from the affine patch contribution.
Every admissible readout therefore incurs at least the residual variance outside its additional-feature output subspace.

Let $k=d_h-d_p\ge0$.
We now bound the variance captured by a subspace $S$ of the patch-output space $\mbb R^{d_p}$ with dimension at most $k$.
Let $q_1,\ldots,q_{d_p}$ be orthonormal eigenvectors of $\operatorname{Cov}(R_t)$, ordered by decreasing eigenvalue.
The total residual variance is $\operatorname{tr}(\operatorname{Cov}(R_t))=\sum_{j=1}^{d_p}\lambda_j(\operatorname{Cov}(R_t))$.
Writing $P_S$ for the orthogonal projector onto $S$, the variance captured by $S$ is
\[
 \operatorname{tr}\big(P_S\operatorname{Cov}(R_t)\big)
 =\sum_{j=1}^{d_p}\lambda_j\big(\operatorname{Cov}(R_t)\big)\|P_Sq_j\|^2.
\]
Each weight $\|P_Sq_j\|^2$ lies in $[0,1]$, and the weights sum to $\operatorname{tr}(P_S)=\dim S\le k$.
Since the eigenvalues are nonnegative and ordered from largest to smallest, this weighted sum is at most the sum of the largest $\min\{k,d_p\}$ eigenvalues.
Subtracting the captured variance from the total therefore gives
\begin{align*}
 \operatorname{tr}\big(P_{S^\perp}\operatorname{Cov}(R_t)\big)
 &=\operatorname{tr}\big(\operatorname{Cov}(R_t)\big)
   -\operatorname{tr}\big(P_S\operatorname{Cov}(R_t)\big) \ge\sum_{j>k}\lambda_j\big(\operatorname{Cov}(R_t)\big).
\end{align*}
For $k\ge d_p$, the sum is empty and equals zero.
Together with the preceding error bound, this proves that no admissible representation and output layer can improve on the right side of \eqref{eq:joint-width-tail}.

It remains to show that this lower bound is attainable.
Return to the original, uncentered variables, and let $Q$ contain orthonormal eigenvectors for the largest $\min\{k,d_p\}$ eigenvalues of $\operatorname{Cov}(R_t)$.
Store the noisy patch and the corresponding residual coordinates in
\[
 h(Y_t)=\big(Y_{t,m},\ Q^\top R_t,\ 0\big)\in\mbb R^{d_h},
\]
where the final zero block pads unused coordinates.
The first $d_p$ coordinates recover the noisy patch exactly and provide the affine baseline $\ell_t(Y_{t,m})$.
The remaining coordinates provide the projection $QQ^\top R_t$, so an affine clean readout can produce
\[
 \widehat X_{1,m}^{(\varepsilon)}(Y_t)=\ell_t(Y_{t,m})+QQ^\top R_t.
\]
Its error is exactly the omitted residual $-(I-QQ^\top)R_t$, giving
\[
 \mbb E\|\widehat X_{1,m}^{(\varepsilon)}(Y_t)-[f_{X_1,t}^{\ast}(Y_t)]_m\|^2
 =\sum_{j>k}\lambda_j\big(\operatorname{Cov}(R_t)\big).
\]
All feature blocks are square-integrable functions of $Y_t$, and the readout matrices are fixed across images.
The correspondence at the start of the proof supplies noise-output coefficients $W_{\varepsilon},b_{\varepsilon}$ inducing this clean estimate.
The construction therefore belongs to the stated class and attains the lower bound, proving the theorem.

Finally, the standard conditional-mean decomposition translates denoiser approximation into clean-image loss:
\begin{align*}
 \mbb E\|\widehat X_{1,m}^{(\varepsilon)}(Y_t)-X_{1,m}\|^2
 &=\mbb E\|[f_{X_1,t}^{\ast}(Y_t)]_m-X_{1,m}\|^2 +\mbb E\|\widehat X_{1,m}^{(\varepsilon)}(Y_t)-[f_{X_1,t}^{\ast}(Y_t)]_m\|^2.
\end{align*}
The first term is the irreducible Bayes risk and is independent of the representation.
Thus the spectral tail also gives the smallest excess clean-image loss above that risk within the stated class.
\end{proof}

\subsubsection{Proof of Corollary~\ref{cor:joint-tolerance}}
\label{app:proof-width-tolerance}
\begin{proof}
For width $d_h=d_p+k$, Theorem~\ref{thm:joint-width} gives the attained minimum approximation error $\sum_{j>k}\lambda_j(\operatorname{Cov}(R_t))$.
This tail sum is nonincreasing in $k$ and is zero at $k=d_p$.
The smallest width attaining tolerance $\delta$ is therefore $d_p$ plus the smallest $k\in\{0,\ldots,d_p\}$ for which the tail sum is at most $\delta$.
For $\delta=0$, all positive residual eigenvalues must be retained, giving $d_h=d_p+\operatorname{rank}(\operatorname{Cov}(R_t))$.
\end{proof}

\section{Experimental protocols and additional results}
\label{app:experiments}
The two subsections follow Sections~\ref{sec:exp-ablation} and~\ref{sec:exp-hidden-width}, respectively.
Fixed-time denoising, probe, and residual-spectrum evaluations use the official ImageNet validation split, held out from backbone training. Fitting, tuning, and test images are disjoint within each diagnostic.

\subsection{Preconditioning lowers denoising floors under input bottlenecks}
\label{app:exp-targets}
\subsubsection{Training, preconditioning, and generation}
\label{app:exp-training}
\label{app:exp-shortcuts}
\label{app:exp-edm}
\textbf{Training.}
The bottleneck suite uses $12$ Transformer blocks and model width $768$, with the patch embedding specified in Section~\ref{sec:exp-ablation}.
Training images are center-cropped, randomly flipped, and normalized to $[-1,1]$.
We use seed $0$, unit Gaussian noise, $\operatorname{logit}(t)\sim\mathcal N(-0.8,0.8^2)$, label-drop probability $0.1$, and zero attention and projection dropout.
The effective batch size is $1024$. AdamW uses $(\beta_1,\beta_2)=(0.9,0.95)$ and zero weight decay; the learning rate warms up for five epochs to $2\times10^{-4}$ and remains constant thereafter.
Each configuration has one training run.

\textbf{Implemented image-loss convention.}
Both experimental suites express the image-space objective as a weighted velocity loss, with conversion denominators clamped at $\tau=0.05$.
Let $c_-(t)=\max(1-t,\tau)$ and $c_+(t)=\max(t,\tau)$. For complete image or noise estimates, the implementation uses
\begin{align}
 v_{\rm target}&=(X_1-X_t)/c_-(t),\nonumber\\
 \widehat v_{\rm image}&=(\widehat X_1-X_t)/c_-(t),\nonumber\\
 \widehat v_\varepsilon&=(X_t-\widehat\varepsilon)/c_+(t),\nonumber\\
 \mathcal L&=\mbb E\big[(1-t)^2\|\widehat v-v_{\rm target}\|^2\big].
 \label{eq:width-implemented-loss}
\end{align}
Pixel dimensions are averaged in the code. Velocity prediction targets $X_1-\varepsilon$ and supplies $\widehat v$ directly; its induced clean estimate is $X_t+(1-t)\widehat v$.
For $\tau\le t\le1-\tau$, this objective equals induced clean-image squared error. Endpoint clamps modify that identity.
Exact preconditioning and its learned initialization also replace $1-t$ by $\max(1-t,\tau)$ in their output coefficients.
All seven fixed-time evaluation points lie in the interval where the implemented and analytic conversions agree, up to numerical precision.
Time and class conditioning are implicit below; the observed class label is included in the population conditioning.

\textbf{Fixed and learned noise-output coefficients.}
In implementation coordinates, the shortcut models output
\begin{equation}
 \widehat\varepsilon=\frac{c_{\mathrm{skip}}}{t}X_t+c_{\mathrm{out}}f_{\varepsilon,t}(AX_t).
 \label{eq:exp-shortcut}
\end{equation}
Since $X_t=tY_t$, this is the parameterization in Section~\ref{sec:conditioning-theory} with $c_{\mathrm{in}}=t$.
Table~\ref{tab:exp-coeff} lists the fixed choices and their converted clean-image skip.
The additive choice leaves $\tilde c_{\mathrm{skip}}=t$, so its discarded-noise contribution is $(1-t)^2(d-r)$ by Corollary~\ref{cor:preconditioning-clean}.

\begin{table}[!htbp]
\centering
\caption{Fixed noise-output coefficients and the resulting clean-image skip. The learned branch receives $AX_t$, so $c_{\mathrm{in}}=t$.}
\label{tab:exp-coeff}
\small
\begin{tabular}{lccc}
\toprule
Variant & $c_{\mathrm{skip}}$ & $c_{\mathrm{out}}$ & $\tilde c_{\mathrm{skip}}$\\
\midrule
Noise & $0$ & $1$ & $1$\\
Additive & $t$ & $1$ & $t$\\
Exact & $1/\sigma_t$ & $-1/\sigma_t$ & $0$\\
\bottomrule
\end{tabular}
\end{table}

A coefficient MLP uses eight sine/cosine frequency pairs, a hidden layer of $64$ units, and a SiLU activation.
Its zero-initialized final layer adds learned corrections to $c_{\mathrm{skip}}/t$ and $c_{\mathrm{out}}$ in \eqref{eq:exp-shortcut}.
Learned 1, Learned 2, and Learned 3 initialize $(c_{\mathrm{skip}},c_{\mathrm{out}})$ to $(1/\sigma_t,-1/\sigma_t)$, $(0,1)$, and $(t,1)$, respectively: exact, noise, and additive initialization.
The same endpoint clamps apply, and the trained skip is evaluated at each time when computing a denoising bound.

\textbf{The two EDM input scalings.}
Both variants use the image target and the EDM output coefficients in Section~\ref{sec:conditioning-theory}, with $\sigma_{\mathrm{data}}=0.5$ \citep[Table~1 and Appendix~B.6]{karras2022edm}.
\emph{EDM: JiT} retains $AX_t$ as the raw input, so $c_{\mathrm{in}}=t$ \citep[Appendix~B.2, Equations~(7)--(8)]{li2025jit}.
\emph{EDM: ReFlow} uses $AX_t/\sqrt{\sigma_{\mathrm{data}}^2t^2+(1-t)^2}$, giving $c_{\mathrm{in}}=1/\sqrt{\sigma_{\mathrm{data}}^2+\sigma_t^2}$ \citep[Appendix~D.2, Equations~(22)--(25) and (35)]{kim2025simplereflow}.
In raw $X_t$ coordinates, the explicit image-output coefficient is $c_{\mathrm{skip}}/t$.
The backbone, output coefficients, optimizer, time sampling, image-loss convention, and evaluation protocol are otherwise shared.
The $c_{\mathrm{out}}^{-2}$ weighting in \citet[Appendix~D.2.2, Equation~(23)]{kim2025simplereflow} is not used in this input-scaling comparison.

\textbf{Generation.}
Generation uses Heun's method with $50$ steps, an Euler final step, and $10{,}000$ generated images.
Guidance scale $2.9$ is active over $[0.1,1.0]$; FID uses the JiT ImageNet256 reference statistics.
Evaluations use epoch-$200$ EMA1 weights with per-update decay $0.9999$.
Nearby FID values are descriptive single-run measurements, without a run-to-run uncertainty estimate.

\subsubsection{Learned versus exact preconditioning coefficients}
\label{app:exp-learned-coefficients}
We extract the coefficient MLPs from the epoch-$200$ EMA1 checkpoints used for generation and denoising, checking their implementation and checkpoint hashes.
Figure~\ref{fig:learned-coefficients} evaluates a $901$-point uniform grid in $[0.05,0.95]$, supplemented by the seven denoising-evaluation times and checked against their saved coefficients. This interval avoids changes from endpoint clamps.

For a noise target, $\tilde c_{\mathrm{skip}}=1-\sigma_t c_{\mathrm{skip}}$, so a unit skip ratio $c_{\mathrm{skip}}/(1/\sigma_t)$ marks the exact-preconditioning condition.
All three variants have $|\tilde c_{\mathrm{skip}}|<0.0045$ over $0.05\le t\le0.5$.
At $t=0.9$, the corresponding values are $0.00566$, $0.899$, and $0.834$ for Learned 1, Learned 2, and Learned 3, respectively, showing their different low-noise behavior.
The output ratio $c_{\mathrm{out}}/(-1/\sigma_t)$ need not equal one: rescaling or changing the sign of the learned branch can be absorbed into $c_{\mathrm{out}}$. Learned 2 and Learned 3 have positive $c_{\mathrm{out}}$, while the exact coefficient is negative.

\begin{figure}[!htbp]
\centering
\includegraphics[width=0.86\linewidth]{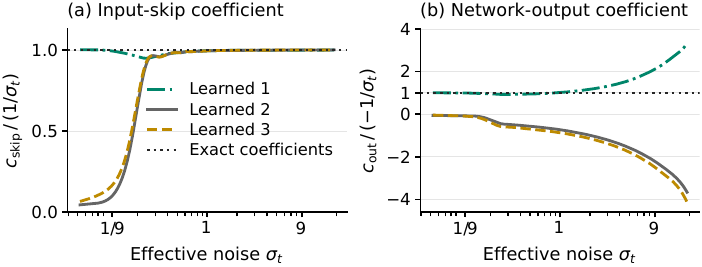}
\caption{Learned coefficients relative to exact preconditioning at epoch $200$ (EMA1). The dotted line marks exact coefficients. (a) The skip ratio approaches $1$ at moderate and high noise. (b) Output ratios can differ in sign and scale because the learned branch can absorb these changes.}
\label{fig:learned-coefficients}
\end{figure}

\subsubsection{Fixed-time denoising losses}
\label{app:exp-native-grid}
\textbf{Checkpoints and conversions.}
We evaluate epoch-$200$ EMA1 weights for all ten bottleneck variants and all sixteen width checkpoints at $t\in\{0.05,0.1,0.2,0.5,0.8,0.9,0.95\}$.
Predictions include the complete skip and output scale before the clean-image conversion defined in Section~\ref{sec:setup}.
Evaluations use true class labels, no classifier-free guidance, and no clipping. We average clean-image and sampled-noise squared error per coordinate and per image, and check their conversion identity numerically.

\textbf{Images and numerical evaluation.}
We select ten images per class without replacement from the official ImageNet validation split, using seed $20261025$.
The resulting $10000$ images are ordered in ten blocks of $1000$, each containing one image per class in randomized class order.
The final two blocks supply $2000$ evaluation images, two per class; validation labels use the same class order as backbone training.
Images receive native-resolution center crops, no random flips, and $[-1,1]$ normalization.
Each image's standard Gaussian draw uses seed $23102026$ plus its zero-based position in this $10000$-image fixture. Draws are reused across times and models at each resolution.
The resolutions share image identities, with resolution-specific Gaussian arrays. The probe fixtures below use separate corruption draws.
The frozen backbone uses BF16 and target conversion uses FP32.
We retain per-image errors and compute $95\%$ percentile intervals with $1000$ image-bootstrap resamples, conditional on the trained checkpoints.
The bottleneck floor uses the rank upper bound $r_p\le128$; a lower actual rank would strengthen it. Curves connect only the seven measured times.

At $t=0.05$, the $95\%$ MSE intervals for noise prediction, image prediction, and exact preconditioning are $[300.794,300.879]$, $[0.10810,0.11223]$, and $[0.10813,0.11229]$, respectively. The noise-prediction interval contains the predicted floor $300.833$.
Figure~\ref{fig:bottleneck-loss-grid} shows all bottleneck results, including variants omitted from Figure~\ref{fig:bottleneck-performance}b.

\begin{figure}[!htbp]
\centering
\includegraphics[width=\linewidth]{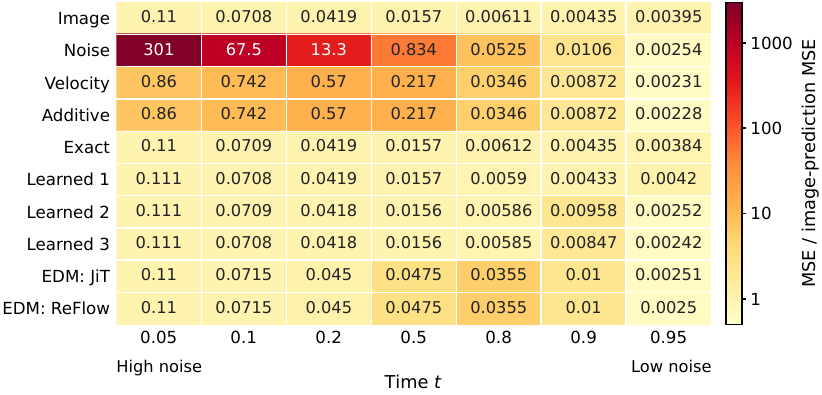}
\caption{Full denoising-loss comparison under the input bottleneck at epoch $200$ (EMA1), using $2000$ common validation images without guidance. Cell values give per-coordinate clean-image MSE to three significant figures; color gives the ratio to the image-prediction MSE at the same $t$, on a logarithmic scale. Small $t$ means high noise.}
\label{fig:bottleneck-loss-grid}
\end{figure}

\subsubsection{Layerwise noise-recovery probes}
\label{app:exp-probes}
\label{app:layerwise-probes}
\textbf{Layerwise affine probes.}
We use the epoch-$200$ noise-prediction checkpoint with EMA1 decay $0.9999$ at $t=0.1,0.5,0.7,0.8,0.9,0.95$.
Layer $0$ is the first block's input, including embedding and positional terms; layers $1$--$12$ are block outputs, with class-context tokens removed.
Block $12$ precedes final normalization, modulation, and output projection; it differs from the final-head input used in the width probes.
The first $8000$ validation images from Appendix~\ref{app:exp-native-grid} fit the probes, and the final $2000$ form the disjoint test set, giving eight fitting and two test images per class. Images receive $256\times256$ center crops, no flips, and $[-1,1]$ normalization.
Independent fitting and evaluation Gaussian streams use seeds $117$ and $991$. All layers receive common images, labels, and corruptions; corruption draws are also shared across times.

Each layer/time pair has a separate affine decoder of the noise patch, with its matrix and intercept shared across patch positions.
The ridge coefficient is $10^{-3}$ times the mean feature-Gram diagonal; the intercept receives a $10^{-8}$ diagonal term for numerical stability.
Backbone inference uses FP32 with TF32 disabled; sufficient statistics and the linear solve use FP64. All six times were evaluated together on one H200.
Scores use the explained-variance definition in Section~\ref{sec:exp-ablation}, with noise variance estimated on the test set by centering each within-patch coordinate across images and patch positions.
They include prediction bias; finite-test normalization can fluctuate around the population bound.
Table~\ref{tab:exp-probe-200-predeps} retains all layers, complementing the block-$12$ curve in Figure~\ref{fig:noise-ceiling}.

\begin{table}[H]
\centering
\caption{Layerwise affine noise-recovery scores for the noise predictor at epoch $200$ (EMA1; $\uparrow$), evaluated together on one H200. Scores use $2000$ validation test images and finite-test noise-variance normalization. Layer $0$ is the embedding; all values are below $1/6$.}
\label{tab:exp-probe-200-predeps}
\small
\begin{tabular}{rrrrrrr}
\toprule
Layer & $t=0.1$ & $t=0.5$ & $t=0.7$ & $t=0.8$ & $t=0.9$ & $t=0.95$\\
\midrule
0 & 0.16662 & 0.16611 & 0.16384 & 0.15873 & 0.13403 & 0.08302\\
1 & 0.16662 & 0.16611 & 0.16383 & 0.15872 & 0.13381 & 0.08163\\
2 & 0.16662 & 0.16610 & 0.16383 & 0.15871 & 0.13387 & 0.08206\\
3 & 0.16662 & 0.16610 & 0.16383 & 0.15871 & 0.13389 & 0.08226\\
4 & 0.16662 & 0.16610 & 0.16383 & 0.15872 & 0.13392 & 0.08225\\
5 & 0.16662 & 0.16610 & 0.16383 & 0.15871 & 0.13390 & 0.08224\\
6 & 0.16662 & 0.16610 & 0.16382 & 0.15871 & 0.13393 & 0.08247\\
7 & 0.16661 & 0.16610 & 0.16382 & 0.15871 & 0.13393 & 0.08247\\
8 & 0.16661 & 0.16609 & 0.16382 & 0.15871 & 0.13394 & 0.08250\\
9 & 0.16661 & 0.16609 & 0.16382 & 0.15871 & 0.13394 & 0.08252\\
10 & 0.16660 & 0.16608 & 0.16382 & 0.15871 & 0.13396 & 0.08259\\
11 & 0.16656 & 0.16604 & 0.16378 & 0.15868 & 0.13397 & 0.08258\\
12 & 0.16651 & 0.16597 & 0.16368 & 0.15868 & 0.13509 & 0.08334\\
\bottomrule
\end{tabular}
\end{table}

\subsection{Widening improves denoising and affine input recovery}
\label{app:bingshuo-protocol}
\subsubsection{Model configurations and generation results}
\textbf{Architecture and training.}
The sweep uses $16\times16$ RGB patches at resolution $256$ and $8\times8$ patches at resolution $128$, with $12$ blocks and $256$ image tokens in every model.
A single learned patch projection maps $\mbb R^{d_p}$ to $\mbb R^{d_h}$.
Noise widths are $768,896,1024,1152,1536$ at resolution $256$ and $192,256,320,384$ at resolution $128$; image widths are $128,256,384,768$ and $64,128,192$, respectively.
Noise predictors have no forced reduction below patch dimension at the input, while image predictors with $d_h<d_p$ retain an input bottleneck.
Attention uses the full model width in its query, key, value, and output projections; no output bypass is added.
Head dimension is $64$, except for the width-$128$, $256$, and $384$ ImageNet256 image predictors, which use head dimension $32$.
The head count is $d_h$ divided by head dimension, so widening changes parameter and head counts as well as representation dimension.

Training follows the preprocessing, optimizer, batch size, schedule, time sampling, and seed-$0$ conventions in Appendix~\ref{app:exp-training}, with loss \eqref{eq:width-implemented-loss}.
Each configuration has one run; checkpoint identity is checked by file hashes.
The ImageNet128 image-width-$64$ checkpoint comes from a fresh matched seed-$0$ run and is shared by generation, native-loss, and input-recovery evaluations.

\textbf{Generation.}
Generation uses Heun's method with $50$ steps, $10{,}000$ images, guidance scale $1$, and sampling seed $0$.
FID uses resolution-specific ImageNet reference statistics; generated samples are not paired across models.
All reported FIDs use epoch-$200$ EMA1 weights with decay $0.9999$ and one GPU with generation batch size $8$; ImageNet256 evaluation used an RTX 5000 Ada GPU.
Figures~\ref{fig:width-performance}a and~\ref{fig:width-performance-i128}a show the complete generation comparison at both resolutions.
These are single-run FID-10K results, without run-to-run uncertainty estimates.

\begin{figure}[H]
\centering
\includegraphics[width=\linewidth]{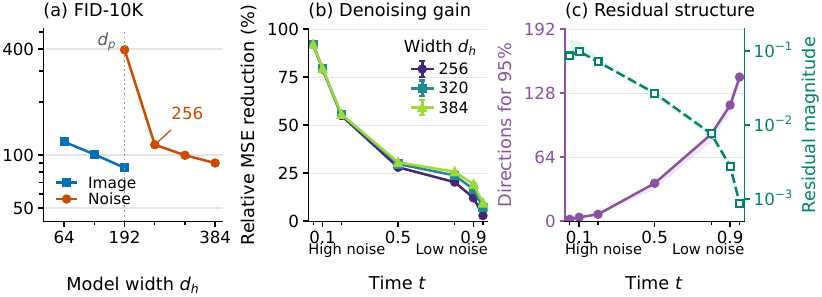}
\caption{ImageNet128 counterpart to Figure~\ref{fig:width-performance}, with patch dimension $d_p=192$. (a) FID-10K versus model width. (b) Relative clean-image MSE reduction from width $192$. (c) Residual structure of a proxy for the population-optimal denoiser. Colors, axes, and shading follow Figure~\ref{fig:width-performance}.}
\label{fig:width-performance-i128}
\end{figure}

\subsubsection{Affine input-recovery probes and controls}
\label{app:exp-affine-recovery}
\label{app:bingshuo-readouts}
\textbf{Features and images.}
We use all sixteen epoch-$200$ EMA1 checkpoints at $t=0.5$.
The feature $h$ is the input to the final linear output projection, after final normalization and modulation, with class-context tokens excluded.
We use the validation fixture from Appendix~\ref{app:exp-native-grid}, with deterministic resolution-specific center crops normalized to $[-1,1]$ and no augmentation.
A fixed permutation of the first $8000$ images, with seed $251025$, supplies $384$ fitting and $128$ tuning images; the final $2000$ fixture images form the disjoint test set.
Within each resolution, all models receive identical images, labels, and Gaussian corruptions.
A fixed CUDA Gaussian stream with seed $99003$ supplies the fitting, tuning, and first $128$ test fields. For these fields at resolution $128$, each disjoint $2\times2$ group of the resolution-$256$ field is summed and divided by $2$.
The remaining $1872$ test fields use independent standard Gaussian pixels from CPU generators with seeds $99003+j$, where $j=640,\ldots,2511$ is the example's index in the combined fitting, tuning, and test sequence, separately at each resolution.
Input recovery and native denoising use the same $2000$ test images and corruptions. These image identities match the fixed-time denoising grid, but the corruption draws differ.

\textbf{Affine fitting and score.}
The probe matrix is shared across all $256$ patch positions, with a position-dependent intercept estimated from fitting images only.
Tuning MSE selects a ridge penalty from $\{0,10^{-8},10^{-6},10^{-4},10^{-2},1\}$ times the mean eigenvalue of the fitting feature covariance; zero penalty uses a pseudoinverse with relative eigenvalue cutoff $10^{-12}$.
The backbone uses BF16; fitting and score accumulation use FP64.
For the fitted readout $\widehat Y_{t,m}$, the population input-recovery score is $100\big(1-\mbb E\|Y_{t,m}-\widehat Y_{t,m}\|^2/\mbb E\|Y_{t,m}-\mbb E[Y_{t,m}]\|^2\big)$.
Its empirical counterpart pools squared errors and target variances across images, positions, and coordinates, centering each within-patch coordinate across images and patch positions. It includes prediction bias and equals $100R^2$.
The implementation targets $X_{t,m}=tY_{t,m}$, which gives the same affine classes and normalized scores at fixed $t>0$.
Only the input recovery is probed: denoising MSE uses the trained output head, true class labels, and no guidance.

Figure~\ref{fig:width-probes} adds the ImageNet128 counterpart and image-prediction controls to the ImageNet256 noise-predictor comparison in Figure~\ref{fig:affine-width}.
The controls show that affine input preservation is not a universal requirement for accurate denoising across prediction targets.
The probes measure finite-sample affine accessibility at one time, without establishing exact recovery, its necessity for noise prediction, or a causal effect on denoising.

\begin{figure}[H]
\centering
\includegraphics[width=\linewidth]{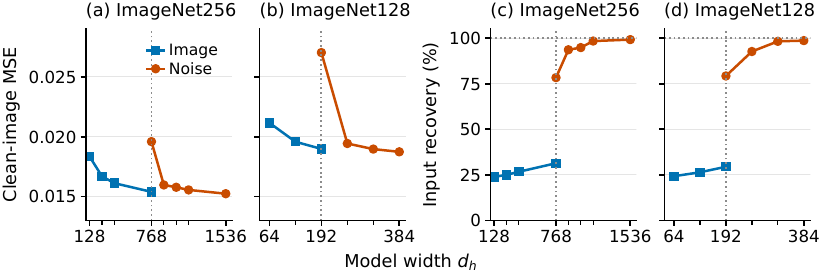}
\caption{Paired denoising and affine input recovery versus model width at epoch $200$ (EMA1) and $t=0.5$. (a,b) Clean-image MSE from the trained denoising head on ImageNet256 and ImageNet128. (c,d) Affine noisy-input recovery from the final-head input at the same resolutions. Blue squares and orange circles mark image and noise prediction. Vertical dotted lines mark $d_h=d_p$; horizontal dotted lines mark $100\%$ input recovery.}
\label{fig:width-probes}
\end{figure}

\subsubsection{Denoising losses across widths and noise levels}
\label{app:width-native-grid}
The checkpoint, image, corruption, and precision protocol is shared with Appendix~\ref{app:exp-native-grid}.
Tables~\ref{tab:width-loss-grid-256} and~\ref{tab:width-loss-grid-128} retain absolute clean-image MSE for all widths and both prediction targets at the two resolutions.
The relative gain in Figures~\ref{fig:width-performance}b and~\ref{fig:width-performance-i128}b is a ratio of sample means, with total clean-image MSE, including Bayes risk, in the denominator.
For its uncertainty intervals, we jointly resample the $2000$ paired image errors and recompute numerator and denominator in each of $1000$ bootstrap resamples, using seed $231023$.
The reference-width gain and interval are identically zero; the $2.5$th and $97.5$th percentiles give the plotted $95\%$ intervals, conditional on the trained checkpoints.
These within-target width comparisons do not imply an ordering of trained image and noise predictors; the absolute-loss grids retain that distinction.

\begin{table}[H]
\centering
\caption{ImageNet256: per-coordinate native clean-image MSE at epoch $200$ (EMA1, $2000$ validation images, no guidance; $\downarrow$). Patch dimension is $d_p=768$. Columns give $t$; the same images and corruptions are used for all widths.}
\label{tab:width-loss-grid-256}
\small
\setlength{\tabcolsep}{3pt}
\begin{tabular}{lrrrrrrrr}
\toprule
Prediction & $d_h$ & .05 & .1 & .2 & .5 & .8 & .9 & .95\\
\midrule
Noise & 768 & 0.54859 & 0.16399 & 0.062464 & 0.019595 & 0.0057381 & 0.0025318 & 0.0011617\\
Noise & 896 & 0.1222 & 0.074817 & 0.043629 & 0.015972 & 0.0049459 & 0.0022827 & 0.0010471\\
Noise & 1024 & 0.12351 & 0.07479 & 0.043377 & 0.015776 & 0.0048654 & 0.0022625 & 0.0010483\\
Noise & 1152 & 0.12228 & 0.074653 & 0.043191 & 0.015557 & 0.0047105 & 0.0021868 & 0.0010165\\
Noise & 1536 & 0.11825 & 0.073 & 0.042467 & 0.015237 & 0.004552 & 0.0020918 & 0.00099022\\
Image & 128 & 0.11259 & 0.073589 & 0.044556 & 0.018345 & 0.0094614 & 0.0080807 & 0.0077457\\
Image & 256 & 0.11145 & 0.07241 & 0.043287 & 0.016699 & 0.0069612 & 0.0051806 & 0.0046526\\
Image & 384 & 0.11087 & 0.071777 & 0.042721 & 0.016122 & 0.006083 & 0.0041282 & 0.003496\\
Image & 768 & 0.11021 & 0.070968 & 0.041957 & 0.015395 & 0.0050643 & 0.0028932 & 0.0021718\\
\bottomrule
\end{tabular}
\end{table}

\begin{table}[H]
\centering
\caption{ImageNet128: per-coordinate native clean-image MSE at epoch $200$ (EMA1, $2000$ validation images, no guidance; $\downarrow$). Patch dimension is $d_p=192$. Columns give $t$; the same images and corruptions are used for all widths.}
\label{tab:width-loss-grid-128}
\small
\setlength{\tabcolsep}{3pt}
\begin{tabular}{lrrrrrrrr}
\toprule
Prediction & $d_h$ & .05 & .1 & .2 & .5 & .8 & .9 & .95\\
\midrule
Noise & 192 & 1.957 & 0.48142 & 0.12913 & 0.027037 & 0.0069793 & 0.0029279 & 0.0012212\\
Noise & 256 & 0.15822 & 0.10029 & 0.058144 & 0.01944 & 0.0055573 & 0.0025714 & 0.0011851\\
Noise & 320 & 0.15593 & 0.099435 & 0.057605 & 0.018981 & 0.0053131 & 0.0024401 & 0.0011328\\
Noise & 384 & 0.15521 & 0.098938 & 0.05719 & 0.01875 & 0.0051801 & 0.0023641 & 0.001104\\
Image & 64 & 0.14759 & 0.098758 & 0.058917 & 0.021162 & 0.0086825 & 0.0068084 & 0.0063907\\
Image & 128 & 0.14647 & 0.097371 & 0.057447 & 0.019596 & 0.0065516 & 0.0042498 & 0.0035854\\
Image & 192 & 0.14597 & 0.096735 & 0.056728 & 0.018996 & 0.0058493 & 0.0034035 & 0.0026305\\
\bottomrule
\end{tabular}
\end{table}

\subsubsection{ADM residual spectra and reference checks}
\label{app:adm-residual-spectrum}
\textbf{Denoisers and evaluation data.}
We use the official class-conditional ImageNet128 and ImageNet256 ADM U-Nets of \citet{dhariwal2021adm}, with checkpoints \texttt{128x128\_diffusion.pt} and \texttt{256x256\_diffusion.pt} from the released implementation.\footnote{\url{https://github.com/openai/guided-diffusion}}
At each resolution, the same $10000$ validation image identities from Appendix~\ref{app:exp-native-grid} receive native-resolution deterministic center crops normalized to $[-1,1]$: the first $6000$ fit the affine baseline and the final $4000$ estimate the spectrum, giving six fitting and four evaluation images per class.
All measurements use true labels, no guidance, no clipping, and the seven times in Appendix~\ref{app:exp-native-grid}.
Image selection and ordering follow Appendix~\ref{app:exp-native-grid}, and each corruption uses seed $23102026$ plus the image's position; each image shares its Gaussian draw across times.
ADM receives $Y_t/\sqrt{1+\sigma_t^2}$ in its variance-preserving convention.
We interpolate its time embedding linearly in log effective noise between the original $1000$ linear-schedule steps, then apply \eqref{eq:noise-induced-clean}; the variance head is unused. Timestep interpolation is an approximation.
Inference and conversion use FP32; covariance estimates and eigendecompositions use FP64.

\textbf{Fitted baseline and residual.}
We analyze one fixed RGB patch at a time: $8\times8$ at resolution $128$ and $16\times16$ at resolution $256$.
Let $\widehat\mu_m,\widehat\Sigma_m$ be its clean mean and covariance from the $6000$ fitting images, with covariance divisor equal to the sample count.
The formula in Appendix~\ref{app:affine-baseline} gives
\begin{align}
 \widehat\ell_t(y)&=\widehat\mu_m+\widehat\Sigma_m(\widehat\Sigma_m+\sigma_t^2I_{d_p})^{-1}(y-\widehat\mu_m),\notag\\
 \widehat R_t&=[\widehat X_1^{(\mathrm{ADM})}(Y_t)]_m-\widehat\ell_t(Y_{t,m}).
 \label{eq:adm-residual-estimate}
\end{align}
Clean moments avoid fitting incidental clean--noise correlations from finitely many corruptions.
The baseline pools classes and uses only the noisy patch; ADM also observes the whole image and class label. Its residual can therefore reflect nonlinear local dependence, information from other patches, and class information.

\textbf{Covariance directions and spatial variation.}
We subtract the empirical residual mean on the $4000$ evaluation images and form the covariance with divisor $4000$.
A residual direction is a unit covariance eigenvector in the patch-output space $\mbb R^{d_p}$.
Magnitude and the $95\%$ direction count use the definitions in Section~\ref{sec:exp-hidden-width}.
Figure~\ref{fig:width-performance}c uses patch $(7,7)$ on the zero-based $16\times16$ patch grid; shading spans separate estimates at $(7,7),(3,3),(3,12),(12,3),(12,12)$.
Positions are analyzed separately, so the band represents spatial variation rather than a confidence interval.
Figure~\ref{fig:width-performance-i128}c gives the corresponding native-resolution ImageNet128 spectrum.

\textbf{Finite-sample and spatial checks.}
For ImageNet256, the five positions give $95\%$ direction counts of $5$--$6$ at $t=0.1$ and $388$--$416$ at $t=0.9$.
Using $3000$ instead of $6000$ baseline-fitting images gives central-patch counts $6$ and $419$.
Learning directions on half the evaluation images and measuring captured variance on the other, reversing roles, gives $6$ and $500$.
The qualitative contrast persists, while exact low-noise counts depend on finite-sample estimation.
The $95\%$ count describes relative concentration; an absolute approximation tolerance also depends on residual magnitude.

\textbf{Denoising quality of the reference.}
Table~\ref{tab:adm-denoising-quality} compares ADM with the epoch-$200$ EMA1 full-input JiT noise and image predictors, using widths $384$ and $192$ at resolution $128$, and $1536$ and $768$ at resolution $256$.
Every row within a resolution uses the same $4000$ validation images and corruptions with FP32 inference and conversion.
This reference-quality comparison is distinct from the $2000$-image, BF16-backbone native-loss grid.
The reference predictions and fitted residual spectrum remain estimates of the population quantities.

\begin{table}[H]
\centering
\caption{Denoising quality of the resolution-matched ADM references: full-image per-coordinate clean-image MSE on $4000$ common validation images within each resolution, without guidance. Columns give $t$; bold marks the lowest measured MSE at each time. All rows use FP32 inference and conversion.}
\label{tab:adm-denoising-quality}
\small
\setlength{\tabcolsep}{3pt}
\begin{tabular}{lrrrrrrr}
\toprule
Denoiser & .05 & .1 & .2 & .5 & .8 & .9 & .95\\
\midrule
\multicolumn{8}{l}{ImageNet256}\\
JiT noise ($d_h=1536$) & 0.111033 & 0.071357 & 0.042082 & 0.015287 & 0.004591 & 0.002104 & 0.000993\\
JiT image ($d_h=768$) & \textbf{0.110087} & 0.070997 & 0.041916 & 0.015472 & 0.005118 & 0.002918 & 0.002197\\
ADM U-Net & 0.110465 & \textbf{0.070728} & \textbf{0.040943} & \textbf{0.014005} & \textbf{0.003711} & \textbf{0.001485} & \textbf{0.000594}\\
\midrule
\multicolumn{8}{l}{ImageNet128}\\
JiT noise ($d_h=384$) & 0.145839 & 0.096349 & 0.056384 & 0.018685 & 0.005189 & 0.002365 & 0.001104\\
JiT image ($d_h=192$) & 0.145676 & 0.096343 & 0.056408 & 0.018963 & 0.005866 & 0.003411 & 0.002636\\
ADM U-Net & \textbf{0.145347} & \textbf{0.095172} & \textbf{0.054216} & \textbf{0.016763} & \textbf{0.004073} & \textbf{0.001563} & \textbf{0.000596}\\
\bottomrule
\end{tabular}
\end{table}

\section{Related literature}
\label{app:related-literature}
\noindent\textbf{Prediction targets and data geometry.}
JiT motivates image prediction through the manifold hypothesis and demonstrates its advantage in coarse-patch Transformers, including models with narrow input embeddings \citep{li2025jit}.
\citet{jin2026dimensionality} connect data geometry to the preferred mixture of clean data and noise in the prediction target through an analysis of a linear predictor shared across time, and propose $k$-Diff to learn this mixture jointly with the network.
These works establish the importance of data geometry when choosing a prediction target.

A further question is whether the architecture preserves the input information needed to predict that target accurately.
Even directions containing little clean-image information can carry noise that must be canceled when forming a clean-image estimate from a noise predictor.
Our analysis quantifies the denoising error caused by losing this information: Section~\ref{sec:input} establishes a rank-dependent error floor even with unrestricted downstream computation, requiring only Gaussian corruption and a finite second moment for the clean data.
Once input compression is removed, Section~\ref{sec:joint-width} characterizes the additional width needed to represent corrections beyond linear denoising, assuming affine recovery of the noisy patch.

\noindent\textbf{Architecture and prediction-target differences.}
\citet{gagneux2026training} place prediction parameterizations in a common denoising formulation, separate the induced denoiser class from loss weighting, and show that target preference depends strongly on architecture: small-patch Transformers favor velocity prediction, while larger patches favor image prediction in their tested settings.
Their controlled-manifold experiments also show that intrinsic dimension interacts with the architecture.
These experiments establish that architectural locality can change which prediction target performs best.
Complementing this experimental evidence, RAE derives a sampled-velocity loss floor for a network with linear input and output maps around a nonlinear body \citep[Theorem~1 and Appendix~B.1]{zheng2025rae}.
Its proof uses the restriction of every prediction to the column space of the final output map.
This identifies a limitation on the directions the output can represent.

These results leave open how information discarded at the input translates into unavoidable denoising error, and how much representation width is needed to retain the noisy input while supporting accurate denoising.
Within the common denoising viewpoint, Theorem~\ref{prop:input} characterizes information unavailable from the input while allowing unrestricted decoder outputs, and Corollary~\ref{cor:amplification} translates the resulting noise-recovery floor into denoising error.
Theorem~\ref{thm:joint-width} and Corollary~\ref{cor:joint-tolerance} then characterize the width required for a chosen denoising accuracy, assuming exact affine recovery of the noisy patch.

\noindent\textbf{Linear denoising and affine input recovery.}
\citet{wang2024gaussianscore} study Gaussian score approximations based on the data mean and covariance, and provide theoretical and empirical evidence that these linear approximations describe learned scores well at moderate to high noise.
\citet{li2024hiddengaussian} fit linear models to trained denoisers and find that, in the generalization regime, these approximations closely resemble the optimal denoisers for a Gaussian distribution with the empirical data mean and covariance.
They also report near-identity behavior at low noise, distinguishing this behavior from evidence of learned Gaussian structure \citep[Appendix~D]{li2024hiddengaussian}.
These observations motivate the representation strategy in Section~\ref{sec:joint-width}: keeping the noisy patch affinely accessible provides the features needed for linear denoising, while additional representation coordinates can encode the remaining correction.
We formalize this strategy in Assumption~\ref{ass:patch-recovery}, which requires exact recovery of the noisy patch through an affine readout of the model representation.
Such a representation supports any affine denoiser of that patch, including the best affine estimate used as our linear-denoising baseline.
This connects the observed linear structure of learned denoisers to our analysis of the additional width needed for corrections beyond that baseline.

\noindent\textbf{Preconditioning and analytic input paths.}
EDM combines a scaled copy of the noisy input with a learned output using noise-dependent coefficients \citep{karras2022edm}; consistency models use a related construction to enforce their boundary condition \citep{song2023consistency}.
Related approaches use analytic score components for subspace data \citep{chen2023score} or project the noise target while recovering the full velocity analytically \citep{chen2026asymflow}.
JiT reports that output preconditioning improves over direct noise and velocity prediction but remains worse than image prediction, and links the improvement to its high-noise approach to image prediction \citep[Appendix~B.2 and Table~10]{li2025jit}.
Section~\ref{sec:conditioning-theory} quantifies this pattern through the input-rank-dependent contribution of discarded noise, distinguishing EDM's high-noise attenuation from exact cancellation at every noise level.
Theorem~\ref{thm:edm-target-convergence} also formalizes the raw target's mean-square convergence to a scaled clean image.
Data preconditioning improves covariance conditioning and optimization in Gaussian and Gaussian-mixture settings \citep{ahamed2026preconditioned}.
This complements our output-preconditioning analysis of the best attainable risk under architectural information constraints.

\end{document}